\definecolor{niceRed}{RGB}{190,38,38}
\definecolor{blueGrotto}{HTML}{059DC0}
\definecolor{royalBlue}{HTML}{057DCD}
\definecolor{navyBlue}{HTML}{0B579C}
\definecolor{limeGreen}{HTML}{81B622}
\definecolor{nicePurple}{HTML}{9c27b0}
\definecolor{lightRoyalBlue}{HTML}{def2ff} 
\definecolor{gold}{HTML}{ffa300}
\theoremstyle{plain}
\newtheorem{theorem}{Theorem}[section]
\newtheorem{proposition}[theorem]{Proposition}
\newtheorem{definition}[theorem]{Definition}
\theoremstyle{remark}
\newtheorem{remark}[theorem]{Remark}
\definecolor{bubblegreen}{RGB}{190,38,38}
\definecolor{bubblegray}{RGB}{241,240,240}
\crefname{section}{Section}{Sections}
\crefname{theorem}{Theorem}{Theorems}
\crefname{assumption}{Assumption}{Assumptions}
\crefname{lemma}{Lemma}{Lemmas}
\crefname{definition}{Definition}{Definitions}
\crefname{conjecture}{Conjecture}{Conjectures}
\crefname{corollary}{Corollary}{Corollaries}
\crefname{construction}{Construction}{Constructions}
\crefname{conjecture}{Conjecture}{Conjectures}
\crefname{claim}{Claim}{Claims}
\crefname{observation}{Observation}{Observations}
\crefname{proposition}{Proposition}{Propositions}
\crefname{fact}{Fact}{Facts}
\crefname{question}{Question}{Questions}
\crefname{problem}{Problem}{Problems}
\crefname{remark}{Remark}{Remarks}
\crefname{example}{Example}{Examples}
\crefname{equation}{Equation}{Equations}
\crefname{appendix}{Appendix}{Appendices}
\crefname{algorithm}{Algorithm}{Algorithms}
\crefname{model}{Model}{Models}
\crefname{figure}{Figure}{Figures}
\newcommand{\eat}[1]{}
\newcommand{\customlabel}[2]{%
\protected@write \@auxout {}{\string \newlabel {#1}{{#2}{\thepage}{#2}{#1}{}} }%
\hypertarget{#1}{}
}
\newcommand{\quadtext}[1]{\quad\text{#1}\quad}
\newcommand{\quadand}{\quadtext{and}}
\def\abs#1{\left| #1 \right|}
\newcommand{\inbrace}[1]{\left\{#1\right\}}
\newcommand{\inparen}[1]{\left(#1\right)}
\newcommand{\insquare}[1]{\left[#1\right]}
\newcommand{\N}{\mathbb{N}}
\newcommand{\Z}{\mathbb{Z}}
\newcommand{\evE}{\ensuremath{\mathscr{E}}}
\renewcommand{\epsilon}{\varepsilon}
\newcommand*{\tran}{{\mathpalette\@tran{}}}
\newcommand*{\@tran}[2]{\raisebox{\depth}{$\m@th#1\intercal$}}
\def\<{\langle}
\def\>{\rangle}
\DeclareMathAlphabet{\mathpzc}{OT1}{pzc}{m}{it}
\newcommand{\customcal}[1]{\euscr{#1}}
\newcommand{\cE}{\customcal{E}}
\newcommand{\cL}{\customcal{L}}
\newcommand{\cX}{\customcal{X}}
\DeclareMathAlphabet{\mathdutchcal}{U}{dutchcal}{m}{n}
\SetMathAlphabet{\mathdutchcal}{bold}{U}{dutchcal}{b}{n}
\DeclareMathAlphabet{\mathdutchbcal}{U}{dutchcal}{b}{n}
\DeclareMathAlphabet\urwscr{U}{urwchancal}{b}{n}%
\DeclareMathAlphabet\rsfscr{U}{rsfso}{m}{n}
\DeclareMathAlphabet\euscr{U}{eus}{m}{n}
\DeclareMathAlphabet\stixcal{LS2}{stixcal}{m} {n}
\newcommand{\itparagraph}[1]{\medskip \noindent\textit{#1}~~}
\renewcommand{\paragraph}[1]{\bigskip  \noindent\textbf{#1}~~}
\newcommand{\ie}{\textit{i.e.}}
\newcommand{\eg}{\textit{e.g.}}
\newcommand{\hypo}[1]{\mathdutchcal{#1}}
\newcommand{\hyH}{\hypo{H}}
\newcommand{\algo}[1]{\mathpzc{#1}}
\newcommand{\generator}{\algo{G}}
\renewcommand{\hat}{\widehat}
\renewcommand{\cE}{\evE}
\newcolumntype{L}[1]{>{\raggedright\let\newline\\\arraybackslash\hspace{0pt}}m{#1}}
\newcolumntype{C}[1]{>{\centering\let\newline\\\arraybackslash\hspace{0pt}}m{#1}}
\newcolumntype{R}[1]{>{\raggedleft\let\newline\\\arraybackslash\hspace{0pt}}m{#1}}
\setlist[enumerate]{leftmargin=20pt}
\setlist[itemize]{leftmargin=20pt}
\newcommand{\negspacePost}{\vspace{0mm}}
\newcommand{\negspacePre}{\vspace{0mm}}
\newmdenv[
    backgroundcolor=lightgray!10, %
    roundcorner=5pt,            %
    linecolor=black,             %
    linewidth=1pt,               %
    innertopmargin=5pt,         %
    innerbottommargin=0pt,      %
    innerleftmargin=10pt,        %
    innerrightmargin=10pt,       %
    skipabove=5pt,              %
    skipbelow=0pt               %
]{curvybox}
\title{
\vspace{-10mm}
On Union-Closedness of Language Generation
}
\date{}
\author{
     \vspace{-7.5mm}
     \vspace{-5mm}
        \begin{tabular}{C{3.5cm}C{3.5cm}C{3.5cm}C{3.5cm}}
        {\bf Steve Hanneke}
            & {\bf Amin Karbasi}
                & {\bf Anay Mehrotra} 
                    & {\bf Grigoris Velegkas}\\[2mm]
        {Purdue University} 
            & {Yale University} 
                & {Yale University} 
                    & {Yale University}\\ %
        \end{tabular}
}
\begin{document}

\maketitle
\thispagestyle{empty}

\begin{abstract}    
    We investigate language generation in the limit – a model by \citet[NeurIPS]{kleinberg2024language} and extended by \citet*[COLT]{li2024generationlenslearningtheory}. While Kleinberg and Mullainathan proved generation is possible for all countable collections, \citep{li2024generationlenslearningtheory} defined a hierarchy of generation notions (uniform, non-uniform, and generatable) and explored their feasibility for uncountable collections.
    Our first set of results resolve two open questions of \citep{li2024generationlenslearningtheory} by proving finite unions of generatable or non-uniformly generatable classes need not be generatable. 
    These follow from a stronger result: there is a non-uniformly generatable class and a uniformly generatable class  whose union is non-generatable.
    This adds to the aspects along which language generation in the limit is different from traditional tasks in statistical learning theory like classification, which are closed under finite unions. 
    In particular, {it} implies that given two generators for different collections, one cannot combine them to obtain a single ``more powerful'' generator, prohibiting this notion of boosting.
    {Our construction also} addresses a third of \citep{li2024generationlenslearningtheory}'s open questions on whether there are uncountable classes that are non-uniformly generatable and do not satisfy the eventually unbounded closure (EUC) condition introduced by Li, Raman, and Tewari.
    {Our {approach} utilizes carefully constructed classes along with a novel diagonalization argument that could be of independent interest in the growing area of language generation.} 
\end{abstract}

{
    \setstretch{1}
    \tableofcontents
}

\negspacePre{}
\section{Introduction}  
        \negspacePost{} 

The algorithmic problem at the core of language generation -- in both humans and large language models (LLMs) -- is deceptively simple: given a sequence of examples from some target language, generate new and previously unseen strings that also belong to this language. Despite the remarkable capabilities of humans and, perhaps, more so of LLMs, to generate coherent text, a thorough understanding of the theoretical underpinnings of language generation has remained elusive.

\paragraph{Language Generation in the Limit.}
\citet{kleinberg2024language} recently formalized this problem in a model resembling online learning: First, an adversary fixes a target language $K \in \cL$ and an enumeration of $K$.\footnote{Formally, en enumeration of $K$ is an infinite sequence of elements $x_1,x_2,\dots$ (possibly including duplicates) such that each $x_i\in K$, and for every element $x \in K$ there is some position $n_x$ in the sequence where $x$ appears.} At each round $n \geq 1$, the adversary presents the $n$-th element $x_n$ of the enumeration to the generator. The generator, given the strings $S_n = \inbrace{x_1, \ldots, x_n}$ seen so far, outputs a new string $w_n \notin S_n$ -- its guess for an unseen string in $K$.
The generator wins this game if it eventually learns ``to generate from $K$.'' Formally, a generator $G$ is said to generate from $\cL$ in the limit if for all $K \in \cL$ and any enumeration of $K$, there exists a finite time $n^{\star}$ such that for any subsequent round $n \geq n^{\star}$, the generated string $w_n$ is an unseen element of $K$, \ie{}, $w_n \in K \setminus S_n$.

This model is closely connected to the seminal work of \citet{gold1967language} -- which studied the problem with the harder goal of \emph{identifying} the target language $K$ from a collection of languages $\cL$ -- and sparked a long line of work both in linguistics and computer science, culminating in a complete characterization by \citet{angluin1979finding,angluin1980inductive}.
This characterization revealed a profound paradox: while language generation is readily accomplished by humans (and now, LLMs), language identification in the Gold--Angluin model proves intractable for virtually all non-trivial collections $\cL$ -- even collections of regular languages, which are considerably simpler than human languages. This intractability persists despite Gold's model imposing no constraints on \mbox{the computational power of the learner.}

\citet{kleinberg2024language} offer a striking resolution to this paradox: they demonstrated that a subtle shift in the problem formulation -- from identification to generation -- renders the problem tractable. 
Their main result is that language generation in the limit is achievable for \emph{any} countable collection of languages. This remarkable finding sparked significant interest within the learning theory community, spawning a growing line of works (\eg{}, \cite{li2024generationlenslearningtheory,kalavasis2025limitslanguagegenerationtradeoffs,charikar2024exploringfacetslanguagegeneration,raman2025generationnoisyexamples}); see \cref{sec:relatedworks} for a detailed discussion.

Most relevant to our work is the work of \citet{li2024generationlenslearningtheory} -- who take a learning theory perspective on language generation in the limit (henceforth, simply language generation) -- allowing $\cL$ to be an uncountable collection -- and characterizing which collections $\cL$ are amenable to different forms of generation.
In particular, they define a hierarchy of generation notions which differ in whether the number of samples $n^\star$ that are needed to achieve consistent generation depends on the target language $K\in \cL$ or its enumeration. In decreasing order of hardness, these notions are:
\begin{itemize} %
    \item[$\triangleright$] \textbf{Uniform Generation} where $n^{\star}$ neither depends on $K$ nor its enumeration; it only depends on $\cL$
    \item[$\triangleright$] \textbf{Non-uniform Generation} where $n^{\star}$ can depend on $\cL$ and $K$ but not $K$'s enumeration
    \item[$\triangleright$] \textbf{Generation} where $n^{\star}$ can depend on $\cL$, $K$, and $K$'s enumeration
\end{itemize}
Here, the weakest notion -- generation -- is the one studied by \citet{kleinberg2024language}.

\paragraph{Our Main Questions.} 
    In their work, \citet{li2024generationlenslearningtheory} demonstrated that the problem of generation, under any of these notions, is fundamentally distinct from prediction and the PAC learning framework: they constructed several classes that are generatable but not PAC learnable and vice versa, establishing a clear separation between these paradigms.
    While these examples illustrate specific differences between prediction and generation, they raise a more profound question: do the conceptual properties of generation differ significantly from those of prediction?
One of the most fundamental properties of traditional prediction tasks (such as binary, multiclass, or online classification) is closure under finite unions -- if classes $\hyH_1$ and $\hyH_2$ can be learned (under any of the aforementioned notions of prediction) then so can $\hyH_1\cup \hyH_2.$
Indeed, this property is at the heart of boosting \citep{yoav1996experiments,schapire2003boostingOverview,chen2016xgboost} and, if true, enables one to combine two or more generators, to obtain more ``powerful'' and versatile generators.
Further, the challenge of effectively combining generators also appears in the work of \citet{kalavasis2025limitslanguagegenerationtradeoffs}, where it is a crucial obstacle in establishing tight sample complexity bounds for generation in statistical settings.
Although \citet{kalavasis2025limitslanguagegenerationtradeoffs} ultimately circumvent this challenge through \mbox{alternative techniques, the following fundamental question remains:}
\begin{itemize}
    \item[$\triangleright$] \textbf{Q1.}~ \textit{Is generation closed under finite unions?}
\end{itemize}
{This was also explicitly posed as a key open problem by \citet{li2024generationlenslearningtheory} (as Questions 6.2) and as a first step in addressing this broader question, they also posed the following more specific variant:}\footnote{\citet{li2024generationlenslearningtheory} show finite unions of uniformly generatable classes are generatable but need not be non-uniformly generatable.}
\begin{itemize}
     \item[$\triangleright$] \textbf{Q2.}~ \textit{Are finite unions of non-uniformly generatable classes always generatable?}
\end{itemize}%
\smallskip

\noindent Apart from differences between generation and prediction, two main results of \citet{li2024generationlenslearningtheory} are complete characterizations of uniform generation and non-uniform generation, respectively.
A natural and important question is to develop a complete characterization of generation in the limit. Toward this goal, \citet{li2024generationlenslearningtheory} provide several sufficient conditions.
Perhaps the most natural one of these is based on the Eventually Unbounded Closure (EUC) property (see \cref{def:euc}).
Informally, a collection $\cL$ satisfies the EUC property if for any target language. $K\in \cL$, after seeing a finite number of elements $x_1,x_2,\dots,x_n\in K$, all languages $L\in\cL$ consistent with $x_1,x_2,\dots,x_n$ share infinitely many elements.
(In other words, all languages in the ``version space'' defined by $x_1,x_2,\dots,x_n$ share infinitely many elements for large enough $n$.)
Further, \citet{li2024generationlenslearningtheory} observed that the EUC property is connected to a certain ``autoregressive'' property where, after seeing finitely many samples, the generator no longer needs to observe further samples to generate new and unseen examples in the future.
Given the central role of EUC in current approaches to characterizing generatability and its connection to the autoregressive property, \citet{li2024generationlenslearningtheory} pose the following question:
\begin{itemize}
    \item[$\triangleright$] \textbf{Q3.}~ \textit{Is there a non-uniformly generatable and uncountable class violating the EUC property?}
\end{itemize}

\negspacePre{}     
\subsection{Our Contributions}\label{sec:ourContributions}
    \negspacePost{} 
    We present a family of constructions and corresponding techniques that resolve three open questions of \citet{li2024generationlenslearningtheory} in the model of language generation of \citet{kleinberg2024language}.
    \begin{enumerate}[leftmargin=15pt]
        \item[$\triangleright$] \textbf{Non-Closure Under Finite Unions:}
            Our first result constructs two collections $\cL_1$ and $\cL_2$ that are generatable individually, while their union $\cL_1 \cup \cL_2$ is not, answering \textbf{Question 1} negatively (\cref{thm:notClosed}). This construction relies on {specific properties relating the two collections (which we explain later; \cref{sec:technicalOverview})}, allowing us to develop a family of counterexamples -- demonstrating that the failure of union-closedness is not isolated but an inherent property of language generation that fundamentally distinguishes it from traditional prediction tasks.
            In particular, we can also ensure that $\cL_1$ and $\cL_2$ \mbox{are both non-uniformly generatable, hence, also resolving \textbf{Question 2}}.
        \item[$\triangleright$] \textbf{{A Minimal Pair of Classes Whose Union Is Not Generatable}:}
            Next, we investigate the spectrum between known extremes: at one end, \citet{kleinberg2024language} showed that unions of countable collections are generatable; at the other end, our first result (\cref{thm:simple}) shows that unions of uncountable generatable collections need not be generatable. 
            Our second result, refines this understanding by constructing a countable collection $\cL_1'$ and an uncountable collection $\cL_2'$ whose union is not generatable.
            Notably, in this construction, $\cL_1'$ is non-uniformly generatable and $\cL_2'$ is uniformly generatable (both without requiring to get any examples from the target language), yet their union $\cL_1' \cup \cL_2'$ is not generatable (Theorem \ref{thm:simple}).
           {This} observation shows that the construction is minimal in the sense that simplifying either collection further (by, \eg{}, making $\cL_1'$ to be uniformly generatable or $\cL_2'$ countable) would make the union generatable, due to results of \citet{li2024generationlenslearningtheory,kleinberg2024language}. 

        \item[$\triangleright$] \textbf{Non-Uniformly Generatable Collections without EUC:}    
            Our third result (\cref{thm:euc}) answers \textbf{Question 3} by identifying an uncountable collection that is non-uniformly generatable but violates the Eventually Unbounded Closure (EUC) property. 
            This result provides insight into the relationship between non-uniform generatability and the autoregressive property, suggesting that current sufficient conditions for characterizing generation in the limit need to be expanded.
    \end{enumerate}
    
    \noindent\textbf{Technical Novelty.} 
        We now highlight the key technical challenges and innovations in obtaining our results. The problem of studying generatability of unions of classes was previously examined by \citet{li2024generationlenslearningtheory}, who showed that countable unions of uniformly generatable classes need not be generatable in the limit. As they note, their result ``showcases the hardness of characterizing generatability in the limit.'' Their construction is highly sophisticated, involving a countable list of collections $\cL_1, \cL_2, \ldots$, each defined by a different prime number. {In this construction, they are} able to ensure that the union, $\bigcup_i \cL_i$, was sufficiently complex that it ends up being non-generatable. This is achievable because they take unions of countably many classes -- an operation under which even standard prediction tasks in learning theory (such as binary \mbox{and multi-class classification) are not closed.}\footnote{Indeed, one can construct a countable union of classes where the $i$-th class has VC dimension $i$, so that the countable union necessarily has infinite VC dimension, rendering it unlearnable.}
        
        A key challenge in our work is that we consider unions of just two classes. These two classes must simultaneously possess enough structure to be individually generatable, yet become sufficiently complex when combined that their union is not generatable in the limit. 
        Our result in \cref{thm:simple} is even more surprising, as it demonstrates this phenomenon with one class being (trivially) uniformly generatable and the other being countable.
        Our construction relies on several careful choices that we outline below and elaborate on in \cref{sec:technicalOverview}.

        \itparagraph{Overview of Diagonalization.}
            The only method for proving non-generatability is \textit{diagonalization} \citep{li2024generationlenslearningtheory,charikar2024exploringfacetslanguagegeneration,kalavasis2024characterizationslanguagegenerationbreadth}. 
            This approach is also fundamental in computational complexity theory \citep{arora2009complexity}. 
            At a very high level, using diagonalization one constructs, for any consistent generator, an adversarial enumeration of the target language with distinct ``phases'' $t_1, t_2, \ldots$ such that in the $i$-th phase, {the} generator must generate from language $L_i$ (and not from languages $L_{i+1}, L_{i+2}, \ldots, L_\infty = K$). Consequently, either the generator fails to be consistent in one phase (failing to generate from $L_i$), or we \mbox{identify infinite steps where it fails to generate from $K = L_\infty$.}

        \itparagraph{Challenges in Using Diagonalization with Finite Unions.}
        When applying this argument to two collections, we must partition the languages $L_1, L_2, \ldots$ between our collections, $\cL_1$ and $\cL_2$. 
        However, this causes a problem: by the Pigeonhole principle, at least one collection must contain infinitely many of these languages. Typically, this would allow us to reproduce a similar diagonalization argument, showing that the corresponding collection (say $\cL_1$) is itself not generatable -- contradicting our goal. (Note that \citet{li2024generationlenslearningtheory} avoid this problem by using countably many collections, assigning language $L_i$ to collection $\cL_i$ --  ensuring that each collection only \mbox{has a single language from $L_1,L_2,\dots$.)}

        \itparagraph{Idea 1 (Embedding A Shared Structure).}
        To overcome this challenge, we divide the languages into two sets ${L_1, L_3, L_5, \ldots}$ and ${L_2, L_4, L_6, \ldots}$, embed a shared structure across each set of languages, and assign the sets to the collections $\cL_1={\inbrace{L_1,L_3,L_5,\dots}}$ and $\cL_2={\inbrace{L_2,L_4,L_6,\dots}}$ respectively.
        For instance, one form of structure is to ensure that $L_1, L_3, L_5, \ldots \supseteq \Z_{+}$ (all odd-indexed languages contain all positive integers) and $L_2, L_4, L_6, \ldots \supseteq \Z_{-}$ (all even-indexed languages contain all negative integers). 
        This structure prevents the diagonalization argument from working independently on either $\cL_1$ or $\cL_2$.
        
        This approach, however, introduces a new challenge: it potentially prevents diagonalization from working on $\cL_1 \cup \cL_2$ as well. For example, a generator could determine whether it is in the ``even world'' (where $K \supseteq \Z_{-}$) or the ``odd world'' (where $K \supseteq \Z_{+}$) by comparing the length of observed prefixes of negative integers versus positive integers. (We make this argument formal in \cref{sec:technicalOverview}.) %

        \itparagraph{Idea 2 (Omitting Shared Elements).}
            To prevent {such} identification, we need to ensure that after seeing only finitely many elements, every even-indexed language consistent with the enumerated stream is sufficiently similar to some odd-indexed language, and vice-versa. 
            In the context of our running example, to do that we allow each language $L_i$ to omit finitely many elements from its shared set (either $\Z_{+}$ or $\Z_{-}$). 
            This turns out to be sufficient to ensure that the generator cannot identify whether $K\in \cL_1$ or $K\in \cL_2$.

        This idea, however, creates an additional technical hurdle because standard diagonalization {leads} to the language $K$ ($= L_\infty$) missing infinitely many elements from both shared sets (positive and negative integers {for $\cL_1$ and $\cL_2$ respectively}), meaning it could not belong to either collection, otherwise the corresponding collection would be non-generatable.
        
        \itparagraph{Idea 3 (A Variant of Diagonalization).}
        Our {third} innovation is a novel form of diagonalization where, counter-intuitively, the adversary forces the generator to make mistakes only in alternate rounds. This approach carefully balances the constraints to ensure that (1) both collections remain generatable individually, (2) their union is not generatable, and (3) the final language $K$ enumerated during diagonalization belongs to one of the collections.

        \bigskip

        We believe this novel diagonalization technique will have broader applications in the study of language generation, potentially paving the way toward a complete characterization of generatability in the limit -- addressing the main open question in this research area identified by \citet{li2024generationlenslearningtheory}.

\negspacePre{} 
\subsection{Related Work}\label{sec:relatedworks}
    \negspacePost{} 
    Our work directly builds on the framework of \citet{kleinberg2024language}, who introduced the model of language generation in the limit. Since then, a growing line of research has explored various aspects of language generation (\eg{}, \cite{li2024generationlenslearningtheory,kalavasis2025limitslanguagegenerationtradeoffs,charikar2024exploringfacetslanguagegeneration,raman2025generationnoisyexamples,peale2024,kleinberg2025densitymeasureslanguagegeneration}). 
    Here, we discuss the most relevant prior works. %
        
    \smallskip
    
    \noindent\textbf{Uniform and Non-Uniform Generation.}
        \citet{li2024generationlenslearningtheory} introduced a hierarchy of three notions of generation -- uniform, non-uniform, and generatable -- and provided characterizations for uniform and non-uniform generation along with sufficient conditions for generatability.
        \citet{charikar2024exploringfacetslanguagegeneration} also, independently and concurrently, studied non-uniform generation and showed that all countable collections can be non-uniformly generated, strengthening the results of \citet{kleinberg2024language} (which only showed that countable collections are generatable and finite collections are uniformly generatable).

    \smallskip
    
    \noindent\textbf{Language Generation with Breadth.}
        While Kleinberg and Mullainathan's algorithm eventually ceases outputting elements outside of $K$ after finite time (\ie{}, it eventually stops hallucinating), this property comes at a cost: the algorithm sacrifices \emph{breadth} -- \ie{}, the ability to generate diverse strings from the target language.
        A number of works study language generation with different notions of breadth and demonstrate that requiring many natural notions of breadth makes generation significantly harder, \mbox{almost as hard as language identification \cite{kalavasis2025limitslanguagegenerationtradeoffs,charikar2024exploringfacetslanguagegeneration,kalavasis2024characterizationslanguagegenerationbreadth,peale2024,kleinberg2025densitymeasureslanguagegeneration}.}

    \smallskip
    
    \noindent\textbf{Further work on Language Generation.}
        Recent works have also explored several other aspects of language generation. 
        \citet{raman2025generationnoisyexamples} investigated language generation in a model where an adversary can introduce errors in the inputs, developing a robust framework for noisy settings.
        \cite{karbasi2025impossibilityautomatedhallucinationdetection} explored the complexity of determining if a specific generator $\generator$ is hallucinating.

    \smallskip
    
    \noindent\textbf{Union-Closedness of Prediction.}
        Understanding the behavior of learning problems such as binary classification and online learning under various natural operations on the underlying hypothesis class, \eg{}, finite unions, intersections, and products, is a fundamental challenge that is now {well-understood in the literature} -- see, \eg{}, \citet{van2009note, alon2020closure,ghazi2021near} and references therein. 
        Notably, {these} properties {also} reveal natural learning strategies: one can decompose the underlying class into simpler ones, learn them separately, and ``combine'' the learners.

\negspacePre{} 
\section{Preliminaries}
    \negspacePost{} 
    In this section, we present some background on language generation in the limit. 

    \paragraph{Notation.}
    Let $\Sigma$ be a finite alphabet (\eg{}, $\{a, b, \ldots, z\}$), and $\Sigma^*$ the set of all finite-length strings formed by concatenating symbols from $\Sigma$. 
    We define a language $L$ as an infinite subset of $\Sigma^*$. 
    A collection of languages is denoted by $\cL$. 
    We define a generating algorithm $\generator = (\generator_n)_{n \in \N}$ as a sequence of mappings $\generator_n\colon (\Sigma^*)^n \to \Sigma^*$ parametrized by the input size $n$. In words, the generator maps a finite training set to a (potentially infinite) set of elements.\footnote{While \citet{kleinberg2024language} required outputting only one element at a time, \citet{kalavasis2025limitslanguagegenerationtradeoffs,charikar2024exploringfacetslanguagegeneration} relaxed this to allow for case where, at some finite point, one can stop training and generate a rich set of responses.}

    \paragraph{Language Generation in the Limit.}
    We begin with the formal definition of language generation in the limit, introduced by \citet{kleinberg2024language}.

    \begin{definition}[Language Generation in the Limit \citep{kleinberg2024language}]\label{def:consistentGeneration}
        Fix some $K$ from the language collection $\cL$ and a generating algorithm $\generator ~{ =\inparen{\generator_n}}.$
                At each step $n$, let $S_n \subseteq K$ be the set of all strings that the algorithm $\generator$ has seen so far. 
                $\generator$ must output a string $w_n \notin S_n$ (its guess for an unseen string in $K$). 
                The algorithm  $\generator$ {is said to generate} from $K$ in the limit if, for all enumerations of $K$, there is some $n^* \in \N$ such that for all steps $n \geq n^*$, the algorithm’s guess {$w_n$} belongs to $K \setminus S_n$ {(or $K \setminus S_n$ is empty)}. The collection $\cL$
                allows for generation in the limit if there is an algorithm  $\generator$ that 
                 generates from $K$ in the limit for any $K \in \cL.$
    \end{definition} 
    \noindent To gain some intuition about this definition, consider the collection $\cL=\{{\Z}, L_1, {L_{-1}}, L_2, {L_{-2}}, \dots\}$ of thresholds over integers where, for each $i\in \Z$, $L_i=\{i, i+1, i+2, \dots\}$. 
    Suppose the target language is some $K \in \cL$ and the adversary first enumerates string $x_1$. The generator can deduce that $K = L_z$ for some $z \leq x_1$, \ie{}, $K \in \{\Z, L_{x_1}, L_{x_1-1}, \dots\}$. 
    Since the intersection of all of these languages is non-empty and is a strict superset of the strings enumerated so far (namely, {the intersection is $\inbrace{x_1+1,x_1+2,\dots}$}), the generator can generate an element that is guaranteed to be in $K$: for instance, it is sufficient to output $x_1+1$.
    More generally, after seeing strings $x_1, x_2, \dots, x_i$, the generator can output any integer larger than $\max\{x_1, x_2, \dots, x_i\}$.
    
    \begin{remark}
        For the problem to be interesting, we assume that each language in the collection has infinite cardinality, \ie{}, $|L| = \infty$ for all $L\in \cL$. 
        (Otherwise, $K \setminus S_n$ eventually becomes empty.) 
    \end{remark}
    \vspace{-5mm}
    \begin{remark}[Repetitions]
        For simplicity, we throughout also assume that the adversary is not allowed to repeat strings in its enumeration. 
        Otherwise, to define uniform and non-uniform generatability, we have to count the number of unique elements listed by the adversary \citep{li2024generationlenslearningtheory}.
    \end{remark}
    \noindent \citet{kleinberg2024language} showed that language generation in the limit is possible for all countable collections of languages --  starkly contrasting results in language identification.

    \paragraph{Uniform and Non-Uniform Generation in the Limit.}
        Next, we present two strengthenings of the notion of generation in the limit introduced by \citet{li2024generationlenslearningtheory}.
        The first notion is the strongest, it requires the number $n^*$ in \cref{def:consistentGeneration} (which is the number of samples required by the generator before it starts generating consistently) to be independent of the target language and its enumeration.
        \begin{definition}[Uniform Generation \citep{li2024generationlenslearningtheory}]
            A collection $\cL$ is said to be uniformly generatable, if there is an algorithm $\generator$ and $n^*$ such that, for each $K\in\cL$ and each (adversarially chosen) enumeration $E_K$ of $K$, $\generator$ generates from $K$ in the \mbox{limit after seeing $n\geq n^*$ examples from $K$.}
        \end{definition}
        \noindent The non-uniform generation weakens this notion by allowing $n^*$ to depend on the target language.
        \begin{definition}[Non-Uniform Generation \citep{li2024generationlenslearningtheory}]
            A collection $\cL$ is said to be non-uniformly generatable, if there is an algorithm $\generator$ such that, for each $K\in\cL$, there is a number $n^*=n^*(K)$ such that for any (adversarially chosen) enumeration $E$ of $K$, $\generator$ generates from $K$ in the limit after seeing $n\geq n^*$ examples from $K$.
        \end{definition}
        \noindent \citet{li2024generationlenslearningtheory} provide characterizations for both the collections that are uniformly and non-uniformly generatable, respectively.
        These, in particular, show that all countable collections are non-uniformly generatable; a result independently and concurrently also shown by \citet{charikar2024exploringfacetslanguagegeneration}.
        They also show that all finite collections are uniformly generatable in the limit; a result that was also earlier shown by \citet{kleinberg2024language}.
         
        A key remaining question, after these works, is a characterization for the weakest notion of generation in the limit in \cref{def:consistentGeneration}; where $n^*$ can depend on both $K$ and $E_K$.
        Next, we define the eventually unbounded closure property, which forms the basis of a sufficient condition for generatability proposed by \citet{li2024generationlenslearningtheory}.
        \begin{definition}[Version Space]
            Given a finite sequence of strings $X=\inbrace{x_1,\dots,x_n}$ and a collection $\cL$, let $V(\cL,X)$ be the set of languages $L\in \cL$ containing $X$.
        \end{definition}
        \vspace{-5mm}
        \begin{definition}[Eventually Unbounded Closure]\label{def:euc} 
            A collection $L$ is said to have the Eventually Unbounded Closure (EUC) property if, for every $L\in \cL$ and enumeration $x_1,x_2,\dots$ of $L$, there is a finite time $t$, at which all languages in $V(\cL,\inbrace{x_1,x_2,\dots,x_t})$ share infinitely many elements.
        \end{definition}
    It is not too hard to show   that all uniformly generatable collections possess the EUC property, while some generatable collections do not \citep{li2024generationlenslearningtheory}. The relationship between EUC and non-uniform generatability is less clear, and \citet{li2024generationlenslearningtheory} constructed a countable class that was non-uniformly generatable without the EUC property, but left open whether an uncountable such class exists. 
    Our work resolves this question by demonstrating an uncountable class that is non-uniformly generatable yet lacks the EUC property.

\negspacePre{} 
\section{Our Results and Technical Overview}
\negspacePost{} 
    In this section, we present our results on language generation in the limit, addressing three open questions of \citet{li2024generationlenslearningtheory}.
    \negspacePre{} 
    \subsection{Non-Closure Under Finite Unions}
    \negspacePost{} 
        Our first result demonstrates that generation is not closed under finite unions. %
        \begin{theorem}\label{thm:notClosed}
            There are uncountable collections $\cL_1,\cL_2$ that are non-uniformly generatable while $\cL_1\cup\cL_2$ is not generatable.
        \end{theorem}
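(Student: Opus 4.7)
My plan is to work over an alphabet $\Sigma$ rich enough to embed $\Z_+$, $\Z_-$, and an uncountable ``payload'' space $\cP$ (for instance, using three disjoint copies of $\Sigma^\ast$). The collection $\cL_1$ will consist of languages of the form $L = (\Z_+ \setminus F) \cup T$, where $F$ ranges over all finite subsets of $\Z_+$ and $T$ ranges over a suitable uncountable family of infinite subsets of $\cP$; $\cL_2$ is defined analogously with $\Z_-$ in place of $\Z_+$. This realizes the shared-structure idea of the technical overview: every language in $\cL_1$ contains all but finitely many positive integers, every language in $\cL_2$ contains all but finitely many negative integers, and the choice of finite deficit $F$ together with the uncountable payload family makes each $\cL_i$ uncountable. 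The payload family will be chosen so that the generator can trivially generate from it in isolation; its role is to supply the ``diagonalization fuel'' needed to defeat any generator on the union.

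To establish non-uniform generatability of each $\cL_i$, I would describe a generator that, for any target $K \in \cL_1$ with deficit $F$, outputs a fresh positive integer larger than every element of $S_n$ and larger than $\max F$; this is guaranteed to lie in $K \setminus S_n$. The required threshold $n^\ast(K)$ depends on $K$ only through $\max F$, so this is non-uniform but generally not uniform. The symmetric argument handles $\cL_2$. Note that within each $\cL_i$ individually the generator does not need to inspect the payload part of $K$ at all; it only needs to certify that it is in the right skeleton.

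The heart of the proof is showing $\cL_1 \cup \cL_2$ is not generatable by a novel diagonalization against any candidate generator $\generator$. I would build an adversarial enumeration in infinitely many alternating phases. Odd phases are \emph{diagonalization phases}: the adversary enumerates a short block in the payload space $\cP$ so that the current version space in $\cL_1 \cup \cL_2$ still contains a language $L' \in \cL_2$ omitting the generator's next output $w_n$, thereby forcing $w_n \notin K$. Even phases are \emph{skeleton phases}: the adversary enumerates one fresh positive integer, ensuring that in the limit all but finitely many positive integers appear, so the limit language $K = \bigcup_n S_n$ has finite deficit against $\Z_+$ and therefore lies in $\cL_1$. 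The alternation is what simultaneously secures (a) infinitely many mistake rounds and (b) membership $K \in \cL_1 \cup \cL_2$. The key enabling feature is that, at the start of every odd phase, the version space in $\cL_1 \cup \cL_2$ still contains languages from both collections, because each language is allowed to omit an arbitrary finite subset of its skeleton; hence the generator cannot yet determine which side $K$ belongs to, and the adversary always has room to choose some $L' \in \cL_2$ with $w_n \notin L'$.

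The main obstacle will be orchestrating three constraints at once: (i) the limit language $K$ must genuinely lie in $\cL_1 \cup \cL_2$; (ii) after every finite prefix, the version space must still contain languages from the ``opposite'' collection so the adversary can diagonalize; and (iii) $\generator$ must make infinitely many mistakes. Balancing (i) against (iii) is exactly what the alternating-round device buys us -- the odd phases destroy $\generator$ at the cost of adding a few payload elements (which are absorbed harmlessly into $T$), while the even phases drive $K$ steadily toward $\Z_+ \setminus F$ for a finite $F$. Making (ii) work forces a careful design of the payload family $\cP$: it must be rich enough that no finite prefix in $\cP$ pins down either collection, yet structured enough that each $\cL_i$ remains non-uniformly generatable in isolation. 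I expect that calibrating the payload so that the ``opposite'' witness $L' \in \cL_2$ always exists at each odd phase, while still ensuring the limiting $K$ ends up in $\cL_1$, is the crux of the argument and the place where the novel alternating-round diagonalization advertised in the overview is indispensable.
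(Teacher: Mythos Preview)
Your construction has a structural flaw that makes $\cL_1 \cup \cL_2$ generatable rather than non-generatable. By your definition, every language in $\cL_1$ is a subset of $\Z_+ \cup \cP$ and every language in $\cL_2$ is a subset of $\Z_- \cup \cP$. Hence the moment the adversary enumerates a single positive integer --- which must happen in any enumeration of any $K \in \cL_1$, since $K \supseteq \Z_+ \setminus F$ --- every $\cL_2$ language drops out of the version space. Your claim that ``at the start of every odd phase, the version space still contains languages from both collections'' therefore fails immediately after the first even phase. Worse, this observation directly yields a generator for the union: output anything until a non-payload element appears; if it is positive, switch permanently to the $\cL_1$ strategy (fresh large positives), and if negative, switch to the $\cL_2$ strategy. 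Since every $K \in \cL_1 \cup \cL_2$ contains infinitely many non-payload elements, the switch is triggered in finite time and thereafter the generator makes only finitely many mistakes. The payload space $\cP$ is a red herring: it can supply uncountability, but it cannot supply the ambiguity between $\cL_1$ and $\cL_2$ that the diagonalization needs, because the skeletons $\Z_+$ and $\Z_-$ are disjoint and each is touched by only one of the two collections.

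The paper repairs exactly this by entangling the two skeletons rather than separating them. It takes $\cL_1 = \{(\Z_- \setminus A) \cup (\Z_+ \setminus B) : |A| < \infty,\ |B| = \infty\}$ and $\cL_2$ symmetrically with $|A| = \infty$, $|B| < \infty$. Now every language in either collection contains infinitely many positives \emph{and} infinitely many negatives, so no finite prefix of integers ever determines the collection. The alternating-phase diagonalization then goes through: in A-subphases the adversary lists consecutive positives until the generator commits to an unseen positive (if it never does, it errs forever against a language in $\cL_2$); in B-subphases it lists consecutive negatives until the generator commits to an unseen negative (if never, it errs forever against a language in $\cL_1$). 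Each committed positive is permanently skipped when the next A-subphase restarts above all previously generated positives, recording one guaranteed mistake per completed phase; meanwhile the B-subphases eventually cover all of $\Z_-$, so if infinitely many phases occur the limit language lies in $\cL_1$. No separate payload is needed --- the uncountability already comes from the free choice of the co-infinite set $B$.
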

        We stress that while each collection is non-uniformly generatable, their union does not just violate non-uniform generatability, it also violates generatability.
        Thus, \cref{thm:notClosed} resolves both Questions 1 and 2 negatively. 
        Further, the construction in \cref{thm:notClosed} relies on a certain ``prefix-realizability'' properties between $\cL_1$ and $\cL_2$ (see \cref{sec:extensions}), which enable us to generalize this approach to a family of counterexamples. This demonstrates that the failure of union-closedness is not an isolated phenomenon but rather an inherent property of language generation.

    \negspacePre{} 
    \subsection{On Finite Union of Non-Uniformly Generatable Collections}
    \negspacePost{} 
        Both collections in \cref{thm:notClosed} are uncountable.
        This is necessary in part: at least one collection must be uncountable since, otherwise, if both are countable collections, then their union remains countable and, hence, generatable by the results of \citet{kleinberg2024language}.
        This naturally raises the question: \textit{must both collections be uncountable for the union to be non-generatable?} 
        
        Our second result shows that both collections need not be uncountable. From our family of counterexamples, we identify a pair $(\cL_1, \cL_2)$ where $\cL_1$ is countable, $\cL_2$ is uncountable, and their union is not generatable.
        \begin{theorem}\label{thm:simple}
            There are collections $\cL_1$ and $\cL_2$ for which $\cL_1\cup \cL_2$ is not generatable such that:
            \begin{itemize}[itemsep=-1pt]
                \item[$\triangleright$]  $\cL_1$ is countable and non-uniformly generatable, without requiring any elements from the adversary
                \item[$\triangleright$] $\cL_2$ is uncountable and uniformly generatable, without requiring any elements from the adversary.
            \end{itemize}
        \end{theorem}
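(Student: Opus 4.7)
\textit{Proof plan.} The plan is to specialize the construction underlying \cref{thm:notClosed} so that one collection becomes countable and the other uniformly generatable without samples.

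For $\cL_2$, fix an infinite anchor $A \subseteq \Sigma^*$ and let $\cL_2 := \{A \cup T : T \subseteq \Sigma^* \setminus A\}$. This is uncountable, and since $A \subseteq L$ for every $L \in \cL_2$, the algorithm returning $\min(A \setminus S_n)$ uniformly generates from $\cL_2$ with $n^* = 0$. For $\cL_1$, fix a disjoint infinite anchor $B \subseteq \Sigma^* \setminus A$ and take $\cL_1$ to be a countable ``designer'' family, each of whose languages contains $B$ and a small, carefully chosen subset of $\Sigma^* \setminus B$ obtained by enumerating trajectories through a combinatorial scaffold. Because $B$ is contained in every $L \in \cL_1$, the algorithm returning $\min(B \setminus S_n)$ non-uniformly generates from $\cL_1$ with $n^*(K) = 0$ for every $K \in \cL_1$. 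This immediately secures the two structural claims in the theorem.

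For non-generatability of $\cL_1 \cup \cL_2$, I would fix an arbitrary purported generator $\generator$ and invoke the alternate-rounds diagonalization sketched in \cref{sec:ourContributions}. The adversary alternates between \emph{commitment rounds}, in which the next unseen element of $B$ is enumerated to push the eventual target $K$ into a specific language of $\cL_1$ and keep $K$ infinite, and \emph{trap rounds}, in which $x_n$ is chosen adaptively (from among candidate $A$- and $B$-elements) so that $\generator$'s deterministic response $w_n$ lies outside the target language the adversary is steering toward. The key invariant is that after any finite prefix both worlds are kept alive: some language in $\cL_1$ and some in $\cL_2$ remain consistent with the prefix, so the generator cannot definitively commit to either collection. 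The family $\cL_1$ is designed precisely so that the intersection of all consistent languages contains no new element beyond the prefix, forcing $\generator$ to err at each trap round; at the same time, by refraining from enumerating any $w_n$ it does not want in $K$, the adversary ensures these mistakes persist in the limit.

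The main obstacle is reconciling (i) countability of $\cL_1$ with (ii) the universal quantifier over generators in the non-generatability claim: naively one would need a fresh trap language per $\generator$, producing an uncountable $\cL_1$. The alternate-rounds structure sidesteps this by letting the adversary reactively pick from a fixed countable skeleton inside $\cL_1$: trap rounds only perturb the branch being traced through the skeleton, while commitment rounds lock the limit of the adversary's choices onto an actual leaf (i.e.\ an actual member of $\cL_1$). Verifying that this skeleton is simultaneously rich enough to defeat every generator, sparse enough to be countable, and guaranteed to converge to an infinite member of $\cL_1$ is the principal technical step -- and is exactly what the novel alternate-rounds diagonalization is engineered to deliver.
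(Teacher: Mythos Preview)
Your plan has a genuine gap at the very first step. You propose that every language in the countable class $\cL_1$ contains the fixed infinite anchor $B$, and you then generate by returning $\min(B\setminus S_n)$. But if $B\subseteq L$ for \emph{every} $L\in\cL_1$, this generator is correct from round $1$ for every target and every enumeration, so $\cL_1$ is \emph{uniformly} generatable, not merely non-uniformly. Since your $\cL_2$ is also uniformly generatable, the result of \citet{li2024generationlenslearningtheory} (finite unions of uniformly generatable classes are generatable) immediately implies that $\cL_1\cup\cL_2$ \emph{is} generatable, killing the diagonalization before it starts. The paper even points this out explicitly: the countable class must \emph{not} be uniformly generatable, or the theorem is vacuous.

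The paper avoids this by a different choice of countable class. Over $\Sigma^*=\Z$, it takes the uncountable class to be $\{\Z_-\cup A: A\subseteq\Z_+\}$ (your $\cL_2$ with anchor $\Z_-$), but the countable class consists of $\N$ together with all languages $\{-i,\dots,-1\}\cup(\Z_+\setminus B)$ with $i\in\N$ and $B\subseteq\Z_+$ \emph{finite}. The crucial feature is that no infinite set is common to all of these: each language contains all but finitely many positives, so the blind generator that outputs $1,2,3,\dots$ eventually stops erring, but only after $\max B$ steps --- a genuinely language-dependent bound, hence non-uniform and not uniform. With this asymmetry in place, the diagonalization is concrete and quite different from your ``commitment/trap'' sketch: the adversary alternates between an $A$-subphase (list fresh positives larger than anything the generator has produced, until the generator outputs an unseen positive) and a $B$-subphase (list consecutive fresh negatives until the generator outputs an unseen negative). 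Each transition from a $B$-subphase to the next $A$-subphase pins a mistake, because the positive the generator just produced is deliberately skipped forever. If some subphase never ends, the enumerated language lies in the appropriate class and the generator errs cofinally; if all subphases end, the limit language is $\Z_-\cup A$ for some infinite $A\subseteq\Z_+$, which lies in the uncountable class. Your proposal would need to replace the shared anchor $B$ by an ``almost-anchor'' of this kind and then carry out a comparably explicit alternation; the skeleton you describe does not yet do either.
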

        Thus, \cref{thm:simple} establishes that unions of non-uniformly and uniformly generatable classes are not guaranteed to be generatable. %
        Moreover, these collections are minimally complex in the following precise sense:
         if either collection is any simpler (\ie{}, if $\cL_1$ is uniformly generatable or $\cL_2$ is countable), then $\cL_1 \cup \cL_2$ would be generatable.
         When $\cL_1$ is uniformly generatable this is due to a result of \citet{li2024generationlenslearningtheory} which shows that unions of uniformly generatable classes are generatable.
         When $\cL_2$ is countable, this is due to a result of \citet{kleinberg2024language} showing that all countable collections are generatable.

        Furthermore, even among non-uniformly generatable and uniformly generatable classes, $\cL_1$ and $\cL_2$ represent the simplest collections because they can be generated without observing any elements from the adversary. 
        In the terminology of \citet{li2024generationlenslearningtheory}, this enables ``autoregressive'' generation -- where the generator can produce new elements without requiring input from the adversary. %

        Finally, since $\cL_1$ is countable it can be expressed as a union of singletons $\cL_1=\bigcup_{i=1}^\infty \inbrace{L_i}$.
        Since each singleton is trivially uniformly generatable, \cref{thm:simple} provides a list of countably many classes ($\cL_2, \inbrace{L_1},\inbrace{L_2},\dots$) that are each uniformly generatable without requiring any elements from the adversary but their union $\cL_2\cup \inbrace{L_1}\cup\inbrace{L_2}\cup\dots$ is not generatable. 
        This, in particular, recovers a result of \citet{li2024generationlenslearningtheory}, namely their Lemma 4.3.

    \vspace{-2mm}

    \negspacePre{} 
    \subsection{On the EUC Property and Non-Uniform Generation}
        \negspacePost{} 
        Our third result resolves Question 3 as a direct consequence of our construction in \cref{thm:notClosed}.
        \begin{theorem}\label{thm:euc}
            There is an uncountable collection $\cL$ that is non-uniformly generatable and violates the EUC property (\cref{def:euc}).
        \end{theorem}
        Specifically, the collection $\cL_1$ from \cref{thm:notClosed} is non-uniformly generatable but violates the EUC property, providing a concrete counterexample that addresses this open question.

\vspace{-2mm}
\negspacePre{} 
\subsection{Technical Overview}\label{sec:technicalOverview}
\negspacePost{} 
    In this section, we give a detailed overview of our approach; {for a higher-level summary, we refer readers to \cref{sec:ourContributions}}. 
    {Our} goal is to show that finite unions of generatable classes need not be generatable.
    {Toward this,} it is instructive to  build some intuition about how to show that {a} class is \textit{not} generatable. 
    Recall that \citet{kleinberg2024language} showed that \emph{any} countable collection is generatable, thus we must necessarily work with uncountable collections.

    \vspace{-2mm}

    \paragraph{\textbf{Warm-Up: A Non-generatable Collection.}}   
        {Let the domain $\Sigma^*$ be $\N$.}
       {A natural candidate to consider is the collection} of all \emph{infinite} subsets of $\N$.\footnote{Recall that for the problem of generation to be well-defined \citet{kleinberg2024language} require that all languages in the collection are infinite.} Given a generator $\generator,$ we will pick some $K \in \cL$ and enumeration of $K$, both tailored to $\generator,$ so that it makes a mistake
    in \emph{every} step. 
   {Let us} denote the element enumerated at step $i \in \N$ by $x_i$  and the element generated at step $i \in \N$ by $y_i = \generator_i\inparen{x_1,\ldots,x_{i}}$.
    {We begin by setting} $x_1 = 1$. {Then, for each} $k \in \N$ we define
    $x_k = \max\inbrace{\max_{i=1,\ldots,k-1} x_i, \max_{j=1,\ldots,k-1} y_j} + 1.$ We 
    let $K = \bigcup_{i \in \N} \inbrace{x_i}$ and $E = \inparen{x_1,x_2,\ldots}.$ Notice that $x_i \neq x_{i'}$ for $i\neq i',$ hence
    $K$ contains infinitely many elements. Moreover, $K \in \cL$ and $E$ is a valid
    enumeration of $K.$\footnote{In fact, it is an ``easy'' enumeration since it lists elements in increasing order.} Why does the learner make a mistake in every step? Consider two cases: at step $i$ either the learner {outputs} some $y_i \in \inbrace{x_1,\ldots,x_i}$ so it {fails} to output an unseen element, or it {outputs} some $y_i \not\in \inbrace{x_1,\ldots,x_i}$. In {the latter case},
    by the definition of $E,$ all the elements we enumerate in subsequent rounds (the unseen elements of $K$), are greater than $y_i,$ which means that $y_i$ is not an unseen element of $K.$ 
    
    Unfortunately, the collection $\cL$ is very complex so it is not clear at all if it is possible to split it into two collections $\cL_1, \cL_2$ such that both of them are generatable and $\cL_1 \cup \cL_2 = \cL.$
    Nevertheless, this idea of constructing hard enumerations and target languages as a function of the underlying generator -- a.k.a.\ diagonalizing against the underlying generator -- will be the first ingredient towards our main result.
    {Next, we explore the crucial question:} Can we construct $\cL_1, \cL_2$ that are individually easy to generate from but $\cL_1 \cup \cL_2$ is (almost) as complex as the previous collection?

\paragraph{A First Attempt (which Fails).} Our first idea is to create $\cL_1$ {and} $\cL_2$ in a symmetric way. {Both} defined over $\Z$, $\cL_1$ contains languages
that are ``easy'' on the negative integers and ``{hard}'' on the positive integers, {while}
$\cL_2$ contains languages
that are ``hard'' on the negative integers and ``easy'' on the positive integers. Taking this approach to the extreme, we define
\[
    \cL_1 \coloneqq \inbrace{L_A \coloneqq \Z_- \cup A,~ A \subseteq {\Z_+},~ \abs{A} = \infty } \quadand \cL_2 \coloneqq \inbrace{L_B \coloneqq \Z_+ \cup B,~ B \subseteq \Z_-,~ \abs{B} = \infty} \,.
\]
In words, every language in $\cL_1$ contains all the negative integers and some infinite subset of the positive integers (and for every such subset there exists a corresponding language in $\cL_1$). The collection $\cL_2$ is defined in a symmetric way where the roles of positive and negative integers are flipped. {Both} $\cL_1$ and $\cL_2$ are uniformly generatable in a trivial way: {for $\cL_1$ it suffices to generate \textit{any} unseen negative number and symmetrically for $\cL_2$.} 

Can {a single generator successfully} generate $\cL_1 \cup \cL_2?$ 
Since there is no systematic way to generate from 
uncountable collections (unlike countable collections), {we explore natural heuristics}. 
   {Perhaps the most intuitive approach is to track the positive and negative integers observed} so far and {generate} from the ``heavier'' side {(the side with more observed elements)}. 
   {This approach fails}:        
        {an adversary can select a language from $\cL_1$ and ensure any finite prefix of the enumeration contains more positive than negative integers -- fooling  the learner into generative positive integers -- and, then, selecting the set $A$ to using our warm-up technique to force mistakes.} 
{However, a more sophisticated generator does exist for $\cL_1\cup \cL_2$:} the generator {keeps} track of the longest prefix\footnote{A  prefix of length $i$ from $\Z_-$ is the set $\inbrace{-i,-i-1,\ldots,-1}$ and symmetrically from $\Z_+$} enumerated on the positive and negative side separately, and outputs an \mbox{unseen number from {the side with the longer prefix}.} 

Why does this  work? {If} $K = \Z,$ then the generator is {always correct}. Otherwise, $K$ either contains $\Z_-$ and misses at least one element from $\Z_+$ or it contains $\Z_+$ and misses at least one element from $\Z_-.$ In either case, since the adversary {must completely enumerate} $K$ the prefix of one of the two sides will stop {growing}, {while} the other {increases} indefinitely. Thus, in the limit, this generator {will output} valid {and} unseen elements from $K.$

\paragraph{An Attempt (which Works).} The core reason why our previous attempt {fails} is that (in the limit), {a sufficiently clever} generator {can identify if $K\in \cL_1$ or $K\in \cL_2$}. 
    To get our result, we need to ensure {that no generator can make this determination.} 
    {We therefore modify our collections $\cL_1, \cL_2$ to make them even more similar while keeping them individually generatable:} 
\begin{align*}
        \cL'_1 &\coloneqq \inbrace{L_{A, B} \coloneqq \inparen{\Z_- \setminus A} \cup \inparen{\Z_+ \setminus B}, A \subseteq \Z_+, \abs{A} < \infty, B \subseteq \Z_-, \abs{B} = \infty }\,, \\
    \cL'_2 &\coloneqq \inbrace{L_{A, B} \coloneqq \inparen{\Z_- \setminus A} \cup \inparen{\Z_+ \setminus B}, A \subseteq \Z_+, \abs{A} = \infty, B \subseteq \Z_- , \abs{B} < \infty } \,.
\end{align*}
{In this construction, languages in $\cL'_1$ contain almost all negative integers (missing only finitely many in $A$) and only some positive integers (missing infinitely many in $B$). } {The collection $\cL'_2$ is defined symmetrically, with finite/infinite exclusions reversed.} 
First, notice that $\cL_1', \cL_2'$ are non-uniformly generatable in a trivial way: there exists a generator that generates from $\cL_1$ (and $\cL_2'$) without seeing
any examples from the target language and the number of mistakes it makes depends only on $K$.\footnote{Indeed, for $\cL'_1$ the generator can start outputting negative integers in decreasing order and since any $K$ misses only finitely many of them 
it will eventually start generating correctly (symmetrically for $\cL'_2$.)}

The crux of the difficulty {with the new collections is that, while they have a clear asymmetry (one side missing finitely many elements, the other missing infinitely many), this asymmetry cannot be detected at any finite time.} Indeed, it
is not hard to see that the sophisticated {generator} that kept track of continuous 
prefixes of both positive and negative integers and generated according
to the {longer-prefix} side fails for $\cL'_1 \cup \cL'_2.$ It turns out that \emph{every} {generator} fails for \mbox{{the union of this pair of collections}.}

To show {this}, a tempting approach is to try to replicate our strategy from the warm-up construction and 
force a mistake in \emph{every} round. For instance, we can start enumerating
positive integers, and if the {generator} ever outputs an unseen positive 
number $x_{i_1}$, switch to enumerating negative integers ensuring that 
we do not enumerate any negative number the {generator} has generated. 
Indeed, $\cL'_1 \cup \cL'_2$ is sufficiently complex to ensure
that for every $t \in \N$ the elements $S_t$ enumerated up to timestep
$t$ are consistent with some language $K_t \in \cL.$ Unfortunately, this property does \emph{not} imply that we present a complete enumeration of 
some $K \in \cL.$ Indeed, it is not hard to construct generators for which
the stated approach ends up enumerating some $\hat K$ that does not 
contain infinitely many negative integers and infinitely many positive integers; such languages are not part of $\cL'_1 \cup \cL'_2.$
Hence, to show the non-generatability \mbox{of $\cL'_1 \cup \cL'_2$ we must use a more involved argument.}

\paragraph{A Modified Diagonalization Argument.} We {now} present a lower bound 
construction that proceeds in (potentially infinitely) many phases, which
are now broken into two subphases,
and is tailored to the underlying generator $\generator.$ 
{Let us introduce some notation:}
{let} $S_t$ {denote} the elements enumerated by the adversary up to (and including) step $t.$ We also denote by $P_t$ the set of \emph{positive} integers the {generator} has outputted up to (and including) step $t$ and
$N_t$ set of \emph{negative} integers the adversary has enumerated up to (and including) step $t$. \mbox{We now describe our construction inductively:}

\medskip

\noindent \underline{Phase 1-A}: During {this phase}, {the adversary} enumerate{s} the negative integers starting from $-1$ in a sequential, decreasing order ($-1,-2,-3,\dots$) until {the generator outputs an unseen negative integer}.
If this never happens, the adversary ends up enumerating $K = \Z_{-} \in \cL'_1$
and the {generator} makes a mistake in every timestep, hence the 
adversary wins the game. Thus, let us assume that $t_{1,A}$ is the first
timestep the \mbox{generator outputs an unseen negative integer. At this point, we switch to phase 1-B.}

\medskip

\noindent \underline{Phase 1-B}: {The adversary now enumerates} positive integers starting from $\max P_{t_{1, A}} + 1$ in a sequential {increasing} order. Let $t_{1,B} > t_{1,A}$ the first 
timestep {when} the {generator} outputs a positive integer. {Using similar reasoning, if this never happens, the adversary enumerates a valid language from} $\cL_2'$ and the {generator} makes infinitely many mistakes. 
Otherwise, we proceed to phase 2. %

{For $\ell \geq 2$, the $\ell$-th phase follows:}

\medskip

\noindent \underline{Phase $\ell$-A}: Upon entering this phase, the adversary enumerates the largest negative integer not yet enumerated, \ie, the number 
$\min N_t - 1,$ and in the subsequent rounds of this phase it continues in
decreasing order. {Importantly, some of these negative integers might coincide with elements the generator has previously outputted,} thus ``correcting'' some of its tentative mistakes. As we explained, this is unavoidable {since} there are $\generator$ such that if the adversary forces $\generator$ to make a mistake in every
step it necessarily enumerates a language that is not in $\cL_1' \cup \cL_2'.$
Let $t_{\ell,A} > t_{\ell-1,B}$ be the first timestep {when} the {generator} outputs
an unseen negative integer. {At this point, \mbox{we move to subphase $\ell$-B.}} %

\medskip

\noindent \underline{Phase $\ell$-B}: Similar to subphase 1-B, {the adversary enumerates} positive integers starting from $\max P_{t_{\ell,A}} + 1$ until the
{generator} outputs a positive integer{, at which point this subphase ends.}

Having {described} the construction, we now argue its correctness. As we argued already, {if the construction terminates after finitely many phases, then} the learner makes infinitely many mistakes and the adversary enumerates a valid language. {Alternatively} if the construction proceeds for infinitely many {phases}, then the adversary enumerates $\Z_-$ along with an infinite subset of $\Z_+,$ {producing a} valid language from $\cL'_1.$ What about the generator's mistakes? The crucial observation is that every time
we move from subphase $\ell$-B to $(\ell+1)$-A the adversary forces a mistake for the generator. {With infinitely many phases, the generator necessarily makes infinitely many mistakes.} The formal details are in \cref{sec:proofs:notClosed}.

\begin{remark}[Connection to EUC]
    {This construction yields an additional insight: there are uncountable classes that are non-uniformly generatable (in a trivial way), yet do not satisfy the Eventually Unbounded Closure (EUC) property. 
    Indeed, both $\cL'_1$ and $\cL'_2$ are uncountable and non-uniformly generatable, but we can prove they violate EUC (\cref{sec:proofs:euc}), thus addressing Question 3.} 
\end{remark}
\vspace{-5mm}
\begin{remark}[Stronger Lower Bound]
    The ideas we described in this section can be utilized to derive the stronger lower bound from \cref{thm:simple}. The formal details are in \cref{sec:proofs:simple}. 
\end{remark}

\section{Proofs}
    \subsection{Proof of \cref{thm:notClosed}}\label{sec:proofs:notClosed}
    In this section, we prove \cref{thm:notClosed}.
    We first {state} a more detailed version of the theorem.
\begin{theorem}
    Let $\Sigma^* = \Z$ and $\cL = \cL_1 \cup \cL_2$ where
    \begin{align*}
    \cL_1 &\coloneqq \inbrace{L_{A, B} \coloneqq \inparen{\Z_- \setminus A} \cup \inparen{\Z_+ \setminus B}, A \subseteq \Z_-, \abs{A} < \infty, B \subseteq \Z_+, \abs{B} = \infty }\,, \\
    \cL_2 &\coloneqq \inbrace{L_{A, B} \coloneqq \inparen{\Z_- \setminus A} \cup \inparen{\Z_+ \setminus B}, A \subseteq \Z_-, \abs{A} = \infty, B \subseteq \Z_+ , \abs{B} < \infty } \,.
\end{align*}
Then, $\cL_1, \cL_2$ are trivially non-uniformly generatable
and $\cL$ is not generatable.
\end{theorem}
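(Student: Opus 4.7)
The plan is to handle the two claims in the theorem separately: the (near-immediate) non-uniform generatability of $\cL_1$ and $\cL_2$ individually, and the (substantive) non-generatability of their union via the alternating diagonalization sketched in \cref{sec:technicalOverview}. For the non-uniform part, I would exhibit for $\cL_1$ the input-ignoring generator that at step $n$ outputs $-(M_n+1)$, where $M_n$ is the maximum absolute value among negative elements of $S_n$ (taking $M_n=0$ if none). This output is never in $S_n$, and for any $K=L_{A,B}\in\cL_1$ the finite set $A$ is bounded in magnitude by some $C(K)<\infty$; once $M_n>C(K)$ every output lies in $\Z_-\setminus A\subseteq K$, so $\generator$ correctly generates from $K$ after a finite step $n^\star(K)$. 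The mirror generator emitting fresh positives handles $\cL_2$.

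For the non-generatability of $\cL_1\cup\cL_2$, I would fix an arbitrary generator $\generator$ and construct an adversarial enumeration in phases $\ell=1,2,\ldots$, each split into two subphases. Writing $N_t:=S_t\cap\Z_-$ and $P_t:=\{y_1,\ldots,y_t\}\cap\Z_+$, subphase $\ell$-A enumerates negatives in decreasing order from $\min N_t-1$ (with $\min\emptyset:=0$) until the first time $t_{\ell,A}$ at which $\generator$ outputs an unseen negative integer. Subphase $\ell$-B then enumerates positives in increasing order from $m_\ell:=\max\bigl(\max P_{t_{\ell,A}},\,\max(S_{t_{\ell,A}}\cap\Z_+)\bigr)+2$ until the first time $t_{\ell,B}$ at which $\generator$ outputs a positive integer, call it $q_\ell$.

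The analysis splits into three cases. If some subphase $\ell$-A never terminates, then every remaining negative is eventually enumerated and only finitely many positives have appeared, so $K=\Z_-\cup(\text{finite set of positives})\in\cL_1$; throughout that subphase the game rule forces each $y_t\notin S_t$ and the non-termination rules out unseen negatives, so every output is a non-negative integer outside $K$, giving infinitely many errors. A never-terminating subphase $\ell$-B is symmetric and yields $K\in\cL_2$. If infinitely many phases complete, then $\Z_-\subseteq K$ and $K$ contains infinitely many positives; moreover, $q_\ell\in P_{t_{\ell',A}}$ for every $\ell'>\ell$, so $m_{\ell'}>q_\ell$ and $q_\ell$ is never subsequently enumerated, which means $q_\ell\notin K$ and an error is charged at every $t_{\ell,B}$.

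The main obstacle is verifying, in the third case, that the enumerated $K$ actually lies in $\cL_1$, i.e., that $|\Z_+\setminus K|=\infty$. With the more natural choice $m_\ell:=\max P_{t_{\ell,A}}+1$, a pathological generator (for instance, one that keeps outputting the same small positive integer) can cause the enumerated positives to fill $\Z_+$ with only finitely many gaps, producing a $K$ outside $\cL_1\cup\cL_2$ and voiding the diagonalization. The $+2$ slack (and the $\max$ with the already-enumerated positives) in the definition of $m_\ell$ is chosen precisely to reserve at least one un-enumerated positive per phase, yielding $|\Z_+\setminus K|=\infty$ while preserving the property $q_\ell\notin K$. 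Turning this into a clean ``reserved-integer'' invariant propagated across phases, and checking that it does not conflict with the mistake-forcing property, is the key piece of bookkeeping I expect to spend most of the write-up on.
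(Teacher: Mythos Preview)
Your proposal is correct and follows the same alternating-phase diagonalization as the paper; your A/B ordering even matches the Technical Overview (the formal proof in \cref{sec:proofs:notClosed} swaps the roles of positives and negatives, which is cosmetic). One bookkeeping difference worth noting: the paper terminates the positive subphase only when the generator's output \emph{exceeds the current maximum enumerated positive}, so that output itself becomes a permanent gap in $K$ and automatically yields $|\Z_+\setminus K|=\infty$; your weaker termination condition (any positive output) is precisely what creates the ``main obstacle'' you flag, and your $+2$ slack is a valid alternative fix for it.
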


\begin{proof}
Notice that every language $L_{A,B}$ in $\cL_1$ contains all the negative
integers except for the finite set $A$ and all of the positive integers except for the infinite set $B$. The languages in $\cL_2$ are defined symmetrically. 
Hence, the algorithm can generate from $\cL_2$ without observing
any input samples simply by just omitting increasing
prefixes of the positive integers (symmetrically for $\cL_2$.)

Next, we show that $\cL = \cL_1 \cup \cL_2$ is not generatable.
Assume, towards a contradiction, that there exists a deterministic generating algorithm 
\[
\generator = (\generator_1, \generator_2, \dots)
\]
that generates from $\cL$ in the limit. We now describe an adversarial strategy that constructs an enumeration $E = \{w_1, w_2, \dots\}$ of a target language $K\in\cL$ such that the generator makes infinitely many mistakes.

{To simplify the notation} {define the following sets}:
\begin{itemize}
    \item $S_t^+$ and $S_t^-$ denote the sets of positive and negative numbers enumerated up to time $t$, respectively, \ie, $S_t^+ = \inbrace{w_1,\dots,w_t} \cap \Z_+, S_t^- = \inbrace{w_1,\dots,w_t} \cap \Z_-$;
    \item $P_t$ be the set of positive integers outputted by the generator in rounds $1$ through $t$.
\end{itemize}
The adversary’s construction is organized into \emph{phases}, each divided into two subphases.

\medskip
\noindent\underline{\textbf{Phase 1.}}
    {This phase is divided into two sub-phases.}
\begin{enumerate}[leftmargin=12pt]
    \item \textbf{Subphase 1-A:}  
    In every round $t$, the adversary enumerates the positive integer $t$, \ie{},
    \[
    w_t = t.
    \]
    This subphase continues as long as the generator’s output, given the current prefix $(1,2,\dots,t)$, is \emph{not} an integer greater than $t$.  
    If there is a first round $t$ for which 
    \[
    \generator_t(1,2,\dots,t) \in \Z_+ \setminus \inbrace{1,\ldots,t},
    \]
    then subphase 1-A ends, and we move to Subphase 1-B.  
    If the generator never deviates (\ie{}, never outputs an integer greater that $\inbrace{1,\ldots,t}$), then the enumeration $E = \{1,2,3,\dots\}$ is complete for the language $K = \N,$ and $\N\in\cL_2$. Moreover, by definition, the generator makes infinitely many mistakes (because it never produces an unseen element of $K$).

    \item \textbf{Subphase 1-B:}  
    Let $t_1$ denote the first time step at which we enter this subphase. In every round $t$ that we are in this phase, the adversary enumerates the negative integer
    \[
    w_t = -t - t_1 - 1\,.
    \]
    This subphase continues until the first time $t$ at which the generator outputs a negative number that is \emph{smaller} than the current $w_t$, \ie{},
    \[
    \generator_t(S_t) \in \Z_- \quad \text{and} \quad \generator_t(S_t) < w_t\,.
    \]
    
    If no such round occurs, then the final enumeration $E$ will be complete for some $K\in\cL_1$, since it consists of all negative integers and a finite number of positive integers, and the generator will have made infinitely many mistakes.
\end{enumerate}

\medskip
\noindent 
\underline{\textbf{Phase $k$ ($k\ge2$).}}  
After the end of Subphase $(k-1)$-B, the adversary alternates the strategy as follows:

\begin{enumerate}[leftmargin=12pt]
    \item \textbf{Subphase $k$-A:}
        Let $t_k$ be the first time of Subphase $k$-A. The adversary first enumerates a fresh positive number defined by
        \[
        p_k = 1 + \max\{x : x\in S_{t_k}^+ \cup P_{t_k}\}\,.
        \]
        Then, in every round $t = t_k, t_k+1,\ldots$ during this subphase, the adversary enumerates
        \[
        w_t = p_k + (t-t_k)\,,
        \]
        thereby listing an increasing sequence of fresh positive numbers.
        This subphase ends when, for the first time, the generator outputs a positive number (given the current prefix) that exceeds the maximum of the current enumerated positives. An identical argument
        to the one we used for subphase 1-A shows that if this subphase
        never terminates, then the generator makes infinitely many mistakes
        and the adversary gives a complete enumeration of a language from $\cL_2.$

    \item \textbf{Subphase $k$-B:}
        After completing Subphase $k$-A, the adversary enters Subphase $k$-B. In every round during this subphase, the adversary enumerates the number
        \[
        w_t = \min S_t^- - 1\,,
        \]
        introducing a fresh negative number.
        This subphase ends at the first round when the generator outputs a negative number that is less than $w_t$. An identical argument
        to the one we used for subphase 1-B shows that if this subphase
        never terminates, then the generator makes infinitely many mistakes
        and the adversary gives a complete enumeration of a language from $\cL_1.$
\end{enumerate}
Let us now assume that infinitely many of the phases are executed.
Then, the target language is $K = \Z_- \cup A,$ where $A \subset \N,$ is determined by the elements enumerated during the subphases $k$-A, $k \in \N$, hence it is a valid  language from $\cL_1.$
Moreover, notice that every time we transition
from subphase $k$-B to $(k+1)$-A, then the generator
makes a mistake because the element it outputted is not included in the constructed enumeration.

Since there are infinitely many phases, and in each phase the generator is forced to err at least once, the deterministic generating algorithm makes infinitely many mistakes. This contradicts the definition of generation in the limit.
\end{proof}

\begin{remark}[{Language Identification Is Not Closed under Finite Unions Either}]
It is worth highlighting that identification in the limit is not closed under finite unions either and it exhibits a very similar behavior: there is a finite collection
$\cL_1$ that is trivially uniformly identifiable and a countable
collection $\cL_2$ that is trivially non-uniformly identifiable such that
$\cL_1 \cup \cL_2$ is not identifiable in the limit. 
To see that, let $\cL_1 = \inbrace{\N}$ and $\cL_2 = \inbrace{l_i \coloneqq \inbrace{1,\ldots,i}, i \in \N}$.
\end{remark}

    \subsection{Proof of \cref{thm:simple}}\label{sec:proofs:simple}
            We first restate a more detailed version of the theorem for completeness.
        {

    \begin{theorem}
    Let $\Sigma^* = \Z$ and let $\cL = \cL_1 \cup \cL_2$ where 
    \begin{align*}
        \cL_1 &= \inbrace{L_A \coloneqq \Z_- \cup A: A \subseteq \Z_+}\\
        \cL_2 &= \inbrace{\N, L_{i,B} \coloneqq \inbrace{-i,\ldots,-1} \cup \inparen{\Z_+ \setminus B}: i \in \N, B \subseteq \Z_+, \abs{B} < \infty} \,.
    \end{align*}
    Then, 
    \begin{itemize}[itemsep=0pt]
        \item $\cL_1$ is uncountable and trivially uniformly generatable,
        
        \item $\cL_2$ is countable and trivially non-uniformly generatable, and

        \item no deterministic generating algorithm 
    $\generator = (\generator_1, \generator_2, \dots)$
    can generate from $\cL$ in the limit.
    \end{itemize}
    
\end{theorem}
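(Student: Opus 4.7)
The easy parts of the theorem follow by inspection. For $\cL_1$: uncountability is immediate from the bijection $A\mapsto \Z_-\cup A$ with $2^{\Z_+}$, and uniform generation is trivial because $\Z_-\subseteq L_A$ for every $L_A\in\cL_1$, so a generator that outputs a fresh negative integer each round (independent of its input) succeeds from the very first step. For $\cL_2$: the class is countable because languages are indexed by the single element $\N$ together with the countable set of pairs $(i,B)$ with $i\in\N$ and $B\subseteq\Z_+$ finite; and a generator that outputs the smallest positive integer larger than all previously output positives and not in $S_t$ is non-uniformly consistent, because for every $K\in\cL_2$ all sufficiently large positives belong to $K$ (either $K=\N$, or $K=L_{i,B}$ with $|B|<\infty$, so after at most $\max B$ steps every subsequent output lies in $K$).

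For the main claim that $\cL_1\cup\cL_2$ is not generatable, I plan to adapt the two-subphase diagonalization from \cref{thm:notClosed}, with one essential modification: since the present $\cL_1=\{\Z_-\cup A\}$ tolerates no missing negatives, the adversary's B-subphases must enumerate negatives starting at exactly $-1$ and proceed \emph{strictly consecutively} across all B-subphases, so that across infinitely many phases the set of enumerated negatives is exactly $\Z_-$. Formally, fix a deterministic generator $\generator$: in subphase $1$-A the adversary enumerates $1,2,3,\dots$ until $\generator$ first outputs a positive integer exceeding the current round; in subphase $k$-A for $k\ge 2$ the adversary enumerates $p_k,p_k+1,\dots$ where $p_k=1+\max(S^+_{t_k}\cup P_{t_k})$, until $\generator$ outputs a positive exceeding the current maximum; in subphase $k$-B the adversary enumerates $\min S^-_t-1$ each round (beginning at $-1$ in subphase $1$-B) until $\generator$ outputs a negative strictly below the current minimum.

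The case analysis then proceeds as follows. (a) If subphase $1$-A never terminates, $K=\N\in\cL_2$ and $\generator$'s forced negative outputs all lie outside $K$. (b) If some subphase $k$-B never terminates, all of $\Z_-$ is eventually enumerated together with a finite union of positive ranges, so $K\in\cL_1$, and $\generator$ is then forced to output positives in the finite gaps or above the last enumerated range, none of which lie in $K$. (c) If some subphase $k$-A with $k\ge 2$ never terminates, the negatives enumerated form a fixed finite prefix $\{-i,\dots,-1\}$ and the positives enumerated are cofinite in $\Z_+$, so $K=\{-i,\dots,-1\}\cup(\Z_+\setminus B)=L_{i,B}\in\cL_2$ with $|B|<\infty$. (d) If infinitely many phases execute, $K=\Z_-\cup A\in\cL_1$ where $A$ is the union of the enumerated positive ranges, and at each end-of-A-subphase the output $y_k^A$ satisfies $\max S^+_{t_k'}<y_k^A<p_{k+1}$, a range never enumerated, yielding one mistake per phase and hence infinitely many mistakes.

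The main obstacle is case (c), where I must verify both that $K\in\cL_2$ and that $\generator$ still makes infinitely many mistakes. The first point holds because the gap set $B=\bigcup_{j<k}\{\max S^+_{t_j'}+1,\dots,p_{j+1}-1\}$ is a finite union of finite intervals and the running subphase eventually enumerates all positives beyond $p_k$, so enumerated positives are cofinite. For the mistakes: since subphase $k$-A never terminates, $\generator$ never outputs a positive beyond the growing maximum, so at each of the infinitely many rounds its output must lie either in the finite positive gap set $B$ (positives outside $K$) or strictly below the fixed finite negative prefix $\{-i,\dots,-1\}$ (negatives outside $K$); either option contributes a mistake every round. Case (d), i.e., verifying that the end-of-A outputs indeed fall in a never-enumerated gap, reduces to the identity $y_k^A<p_{k+1}$, which follows directly from the definition $p_{k+1}=1+\max(S^+_{t_{k+1}}\cup P_{t_{k+1}})\ge y_k^A+1$.
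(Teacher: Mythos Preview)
Your proposal is correct and follows the same two-subphase diagonalization as the paper. Your observation that the B-subphases must enumerate negatives consecutively starting from $-1$ (so that the limiting language in cases (b) and (d) actually lies in $\cL_1=\{\Z_-\cup A\}$) is precisely the adaptation this theorem requires, and your explicit treatment of case (c) together with the identification of the forced mistake at the end of each A-subphase are in fact more careful than the paper's own write-up, which largely reuses the argument of \cref{thm:notClosed} verbatim.
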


\begin{proof}
Notice that every language $L_A$ in $\cL_1$ contains all the negative
integers and some subset $A$ of the positive integers. Hence, $\cap_{L \in \cL_1} = \Z_-,$
which implies that an algorithm can generate from $\cL_1$ without using any
input samples, by simply outputting negative integers. 
Regarding $\cL_2,$ notice that every language $L_{i,B}$ contains the first $i$ negative
integers and all positive integers except for the finite set $B.$ We claim that for any such language $L_{i,B},$ the algorithm that just outputs number $\inbrace{1,2,\ldots}$
achieves generation in the limit, without requiring any input samples from $L_{i,B}.$
To see that, notice that since $B$ is finite it has some maximum element, denoted
by $x_B.$ Notice that $\inparen{\Z_+ \setminus \inbrace{1,\ldots,x_B}} \subseteq L_{i,B}$. Thus, at timestep $t = x_B + 1$ the algorithm will generate a valid
element and will keep generating valid elements from that point on. Hence, 
$\cL_2$ is trivially non-uniformly generatable. He next show that $\cL = \cL_1 \cup \cL_2$ is not generatable.
{The proof of this uses an analog of the diagonalization argument in \cref{thm:notClosed}. We produce the complete argument below.}

Assume, towards a contradiction, that there exists a deterministic generating algorithm 
\[
\generator = (\generator_1, \generator_2, \dots)
\]
that generates from $\cL$ in the limit. We now describe an adversarial strategy that constructs an enumeration $E = \{w_1, w_2, \dots\}$ of a target language $K\in\cL$ such that the generator makes infinitely many mistakes.

For clarity, let:
\begin{itemize}
    \item $S_t^+$ and $S_t^-$ denote the sets of positive and negative numbers enumerated up to time $t$, respectively, \ie, $S_t^+ = \inbrace{w_1,\dots,w_t} \cap \Z_+, S_t^- = \inbrace{w_1,\dots,w_t} \cap \Z_-$;
    \item $P_t$ be the set of positive integers outputted by the generator in rounds $1$ through $t$.
\end{itemize}
The adversary’s construction is organized into \emph{phases}, each divided into two subphases.

\medskip
\noindent\underline{\textbf{Phase 1.}}

\begin{enumerate}
    \item \textbf{Subphase 1-A:}  
    In every round $t$, the adversary enumerates the positive integer $t$, \ie{},
    \[
    w_t = t.
    \]
    This subphase continues as long as the generator’s output, given the current prefix $(1,2,\dots,t)$, is \emph{not} an integer greater than $t$.  
    If there is a first round $t$ for which 
    \[
    \generator_t(1,2,\dots,t) \in \Z_+ \setminus \inbrace{1,\ldots,t},
    \]
    then subphase 1-A ends, and we move to Subphase 1-B.  
    If the generator never deviates (\ie{}, never outputs an integer greater that $\inbrace{1,\ldots,t}$), then the enumeration $E = \{1,2,3,\dots\}$ is complete for the language $K = \N,$ and $\N\in\cL_2$. Moreover, by definition, the generator makes infinitely many mistakes (because it never produces an unseen element of $K$).

    \item \textbf{Subphase 1-B:}  
    Let $t_1$ denote the first time step at which we enter this subphase. In every round $t$ that we are in this phase, the adversary enumerates the negative integer
    \[
    w_t = -t - t_1 - 1.
    \]
    This subphase continues until the first time $t$ at which the generator outputs a negative number that is \emph{smaller} than the current $w_t$, \ie{},
    \[
    \generator_t(S_t) \in \Z_- \quad \text{and} \quad \generator_t(S_t) < w_t.
    \]
    
    If no such round occurs, then the final enumeration $E$ will be complete for some $K\in\cL_1$, since it consists of all negative integers and a finite number of positive integers, and the generator will have made infinitely many mistakes.
\end{enumerate}

\medskip
\noindent\underline{\textbf{Phase $k$ ($k\ge2$).}}  
After the end of Subphase $(k-1)$-B, the adversary alternates the strategy as follows:

\begin{enumerate}
    \item \textbf{Subphase $k$-A:}
    \begin{itemize}
        \item Let $t_k$ be the first time of Subphase $k$-A. The adversary first enumerates a fresh positive number defined by
        \[
        p_k = 1 + \max\{x : x\in S_{t_k}^+ \cup P_{t_k}\}.
        \]
        \item Then, in every round $t = t_k, t_k+1,\ldots$ during this subphase, the adversary enumerates
        \[
        w_t = p_k + (t-t_k),
        \]
        thereby listing an increasing sequence of fresh positive numbers.
        \item This subphase ends when, for the first time, the generator outputs a positive number (given the current prefix) that exceeds the maximum of the current enumerated positives. An identical argument
        to the one we used for subphase 1-A shows that if this subphase
        never terminates, then the generator makes infinitely many mistakes
        and the adversary gives a complete enumeration of a language from $\cL_2.$
    \end{itemize}

    \item \textbf{Subphase $k$-B:}
    \begin{itemize}
        \item After completing Subphase $k$-A, the adversary enters Subphase $k$-B. In every round during this subphase, the adversary enumerates the number
        \[
        w_t = \min S_t^- - 1,
        \]
        introducing a fresh negative number.
        \item This subphase ends at the first round when the generator outputs a negative number that is less than $w_t$. An identical argument
        to the one we used for subphase 1-B shows that if this subphase
        never terminates, then the generator makes infinitely many mistakes
        and the adversary gives a complete enumeration of a language from $\cL_1.$
    \end{itemize}
\end{enumerate}
Let us now assume that infinitely many of the phases are executed.
Then, the target language is $K = \Z_- \cup A,$ where $A \subset \N,$ is determined by the elements enumerated during the subphases $k$-A, $k \in \N$, hence it is a valid  language.
Moreover, notice that every time we transition
from subphase $k$-B to $(k+1)$-A, then the generator
makes a mistake because the element it outputted is not included in the constructed enumeration.

Since there are infinitely many phases, and in each phase the generator is forced to err at least once, the deterministic generating algorithm makes infinitely many mistakes. This contradicts the definition of generation in the limit.
\end{proof}

    \subsection{Proof of \cref{thm:euc}}}
        \label{sec:proofs:euc}
        {In this section, we prove \cref{thm:euc}.
        We begin by stating}
    a more detailed version of {\cref{thm:euc}}.

    \begin{theorem}
    Let $\Sigma^* = \Z$ and let 
    \begin{align*}
    \cL \coloneqq \inbrace{L_{A} \coloneqq \inparen{\Z_- \setminus A}, A \subseteq \Z_-, \abs{A} < \infty}\,.
\end{align*}
Then, $\cL$ is trivially non-uniformly generatable but 
it does not satisfy the EUC property.
\end{theorem}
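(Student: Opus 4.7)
The plan is to verify the two claims of the theorem separately; each is short.

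For the first part (non-uniform generatability), I would exhibit a simple autoregressive generator that requires no input from the adversary. Since every $L_A \in \cL$ has the form $\Z_- \setminus A$ with $|A| < \infty$, any negative integer of sufficiently large magnitude lies in $L_A$. Concretely, at step $n$ the generator outputs $-k_n$ where $k_n$ is a positive integer chosen larger than both $n$ and $\max\inbrace{|s| : s \in S_n}$; such $k_n$ exists and ensures $-k_n \notin S_n$ by magnitude. Setting $m := \max_{a \in A}|a|$ (with $m := 0$ if $A = \emptyset$), the output lies in $L_A \setminus S_n$ as soon as $k_n > m$, which occurs from some step $n^\star$ depending only on the target $K = L_A$ and not on its enumeration. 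This gives (trivial) non-uniform generation.

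For the second part (failure of EUC), I would exhibit a single witness language and enumeration that defeats the property. Take $L := \Z_- \in \cL$ (this is $L_\emptyset$, and $\emptyset$ has cardinality $0 < \infty$) together with an arbitrary enumeration $x_1, x_2, \dots$ of $\Z_-$. The key observation is that $L_A \in V(\cL, \inbrace{x_1, \dots, x_t})$ if and only if $A \cap \inbrace{x_1, \dots, x_t} = \emptyset$. From this I would compute the intersection $\bigcap_{L' \in V(\cL, \inbrace{x_1, \dots, x_t})} L'$ explicitly: every seen element trivially lies in every $L'$ in the version space; conversely, for any $y \in \Z_- \setminus \inbrace{x_1, \dots, x_t}$, the singleton-excluding language $L_{\inbrace{y}} = \Z_- \setminus \inbrace{y}$ belongs to $\cL$ (since $\inbrace{y}$ is finite), lies in the version space, and excludes $y$, so $y$ is not in the intersection. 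Hence the intersection is exactly $\inbrace{x_1, \dots, x_t}$, a finite set of size $t$, for every finite time $t$, and the EUC property fails.

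The entire conceptual content is packed into the singleton-exclusion witness $L_{\inbrace{y}}$ used in the EUC argument: because $\cL$ contains ``$\Z_-$ minus any single point'' as a language, the version space is always rich enough to rule out every candidate outside the seen set, forcing the shared elements to coincide with the (finite) observed prefix. I do not anticipate any substantive obstacle beyond spelling out these two observations.
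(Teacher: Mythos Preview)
Your proposal is correct and essentially identical to the paper's proof: both parts use the same ideas (outputting sufficiently negative integers for non-uniform generatability, and the singleton-exclusion language $\Z_-\setminus\{y\}$ to show the version-space intersection collapses to the seen set for the EUC failure). The only cosmetic difference is that the paper proves the stronger statement $\bigcap_{L'\in V(\cL,S_t)}L'=S_t$ for \emph{every} target $K$ and seen set $S_t$, whereas you exhibit the single witness $K=\Z_-$; since violating EUC requires just one witness, your version is already sufficient.
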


\begin{proof}
    To see that $\cL$ is trivially non-uniformly generatable
    notice that, since every $K \in \cL$ omits only finitely
    many negative integers the algorithm that in every round
    $t$ generates the largest element from the set
    $O_t = \inbrace{i \in \Z_-, i \leq -t, i \not\in S_t}$
    generates in the limit. In fact, this algorithm
    generates in the limit in the modified setting where $S_t = \emptyset$ for all rounds $t,$ hence it is trivially uniformly generatable.

    We now show that $\cL$ does not satisfy the EUC property.
    Consider any target language $K$ and $S_t$ any set of elements
    of $K.$ Consider the induced version space $V(\cL,S_t)$, \ie, the 
    set of languages from $\cL$ containing $S_t$.
    Assume that $\cap_{L \in V(\cL,S_t)} {L}\neq S_t$
    for some $ t.$ Then, there exists some 
    $x_t \in \cap_{L \in V(\cL,S_t)} {L}$ {such that} $ x_t \not\in S_t.$
    Consider the language $L = \Z_- \setminus \inbrace{x_t}.$
    Then, $S_t\subseteq L$ and $x_t \not\in L,$ 
    hence this show that  $\cap_{L \in V(\cL,S_t)} {L} = S_t$ {(violating the EUC property)}.
\end{proof}

\vspace{-2mm}
\negspacePre{} 
\section{Concluding Remarks}
\negspacePost{} 
In this paper we have continued studying the emerging line of work on language generation in the limit, resolving three open questions from \citet{li2024generationlenslearningtheory}. While the work of \citet{kleinberg2024language}
showed a remarkable tractability of the problem when the collection
of languages is \emph{countable}, our results and the results of 
\citet{li2024generationlenslearningtheory} show that this learning task
is significantly more involved when we move to uncountable collections and
exhibits behaviors that are qualitatively different from traditional learning
problems such as binary classification \citep{valiant1984theory,vapnik:71}
or online learning \citep{littlestone1988learning}. Our results 
further highlight the difficulty of coming up with a \emph{characterization} 
of generatibility, since there are collections that are generatable (in a strong sense), yet even taking the union of two of them yields a non-generatable collection. Nevertheless, we hope that the technique we introduce
to show non-generatability will help pave the way to the solution
of this challenging problem.

\subsection*{Acknowledgments}
    This research was supported (in part) by the AI Institute for Learning-enabled Optimization at Scale (TILOS).

\appendix

\addtocontents{toc}{\protect\setcounter{tocdepth}{1}}
\printbibliography
\newpage

\section{Further Results}

\subsection{Formal Lower Bound of Non-Generatability}

In this section we show formally that the
class of all infinite languages over
$\N$ is not generatable in the limit.
It is worth mentioning that this 
result follows from {a result of} \citet{li2024generationlenslearningtheory}.
{Here,} we give a simpler proof
which also generalize{s}
to \emph{randomized} learners {(\cref{prop:not-generatable-ran})}.

\begin{proposition}\label{prop:not-generatable-det}
    Let $\Sigma^* = \N$ and $\cL$ be the set of all infinite
    subsets of $\N.$ Then, no deterministic algorithm
    can generate from $\cL$ in the limit.
\end{proposition}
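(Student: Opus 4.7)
The plan is to formalize the warm-up diagonalization argument already sketched in \cref{sec:technicalOverview}. Fix any deterministic generating algorithm $\generator = (\generator_n)_{n\in\N}$. I will construct, adversarially and as a function of $\generator$, an enumeration $E = (x_1, x_2, \dots)$ of an infinite target language $K \subseteq \N$ on which $\generator$ fails at every single round.

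The construction is inductive. Set $x_1 = 1$, and at each step $k \geq 2$ let
\[
x_k = 1 + \max\inbrace{x_1, \dots, x_{k-1}, y_1, \dots, y_{k-1}},
\]
where $y_j = \generator_j(x_1, \dots, x_j)$ is the generator's output at round $j$. Define $K = \{x_i : i \in \N\}$. The first step of the verification is to confirm that $K \in \cL$ and that $E = (x_1, x_2, \dots)$ is a valid enumeration of $K$: by construction the $x_i$ are strictly increasing natural numbers, so they are distinct, which makes $K$ an infinite subset of $\N$ (hence $K \in \cL$), and $E$ trivially lists every element of $K$.

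The second step is to show that for every round $i$, the generator's guess $y_i$ is not a valid answer, i.e.\ $y_i \notin K \setminus S_i$ where $S_i = \{x_1, \dots, x_i\}$. Split into two cases. If $y_i \in S_i$, then $\generator$ violates the ``unseen'' requirement immediately. Otherwise $y_i \notin S_i$, and I claim $y_i \notin K$: indeed, by the defining recursion we have $x_k > y_i$ for every $k \geq i+1$, so $y_i$ cannot equal any $x_k$ with $k > i$, and by assumption it equals no $x_k$ with $k \leq i$ either. Either way $y_i$ is not an unseen element of $K$, so $\generator$ fails at every round and in particular makes infinitely many mistakes, contradicting generation in the limit.

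I do not anticipate any substantive obstacle here, since the construction only uses that $\generator$ is deterministic (so that $y_j$ is well-defined as a function of the history) and that $\cL$ contains every infinite subset of $\N$ (so that the language $K$ we build is automatically in $\cL$). The only thing worth being careful about is to compute $x_k$ using the generator's outputs through round $k-1$ and not through round $k$, so that $x_k$ is well-defined at the moment the adversary needs to enumerate it; the recursion above is written with precisely this dependence. This will also make the generalization to randomized learners in \cref{prop:not-generatable-ran} essentially immediate, since the same enumeration, applied pathwise, forces an error at each round with probability one.
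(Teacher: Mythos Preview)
Your proof is correct and is essentially identical to the paper's own argument: both construct the enumeration via $x_k = 1 + \max\{x_1,\dots,x_{k-1},y_1,\dots,y_{k-1}\}$ and verify that every output $y_i$ is either already seen or strictly smaller than all future enumerated elements. One caveat on your closing remark: the randomized case in \cref{prop:not-generatable-ran} does \emph{not} follow by applying this enumeration pathwise, since the adversary must fix the enumeration in advance (depending only on the algorithm, not on its realized randomness); the paper instead chooses $x_n$ via a quantile of the random variable $\max_{j<n}\generator_j(\cdot)$ and invokes Borel--Cantelli.
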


\begin{proof}
    Let $\generator = \inparen{\generator_n\colon \N ^n \rightarrow \N}_{n \in \N}$ be a deterministic generating algorithm and, for each $n$, let $\generator_n(x_1,x_2,\dots,x_n)$ be the  string outputted by $\generator_n$ when provided the strings $x_1,x_2,\dots,x_n$ as input.
    We will show that there is a language $K \in \cL$
    and an enumeration $E$ of $K,$ both of which depend
    on $\generator,$ for which $\generator$ fails
    to generate from $K$ in the limit. We will construct
    the enumeration and the target language inductively.
    For all $n \in \N,$ we denote by $E_n$ the
    first $n$ elements of $E.$

    Let $x_1 = 1$ and $E_1 = \inparen{1}.$ We then
    define $x_2 = \max\inbrace{x_1, \generator_1(x_1)} + 1$ and
    $E_2 = \inparen{x_1,x_2}.$
    Continuing in the same fashion, for any $n \in \N,$ 
    we define $x_n = \max\inbrace{\max_{j < n}x_{j}, \max_{j < n}\generator_{j}(x_1,\ldots,x_{j})} + 1$ and $E_n = \inparen{x_1,\ldots,x_n}.$ Lastly,
    we let $K = \cup_{n \in \N} x_n.$

    First, notice that since $x_i \neq x_j, \forall i, j \in \N, i \neq j,$ it holds that $\abs{K} = \infty,$ hence $K \in \cL.$
    Moreover, notice that $E$ is a valid enumeration 
    of $K.$ Finally, notice that for every $n \in \N,$
    the algorithm either outputs a number that does not belong
    to $K$ or a number that has already been enumerated. 
    To prove that formally, it suffices
    to show that for all $n \in \N$ it holds that $\generator_n(x_1,\ldots,x_n) \notin \inbrace{x_{n+1},x_{n+2},\ldots}.$
    This follows by two observations: for all $n \in \N$ it holds that $x_{n+1} > \generator_n(x_1,\ldots,x_n)$ and $x_{n+1+j} > x_{n+1}, \forall j \in \N.$
    
\end{proof}

\noindent It is natural to consider whether randomization can help
circumvent the result of \cref{prop:not-generatable-det}.
Our next result shows that this is not the case.

\begin{proposition}\label{prop:not-generatable-ran}
    Let $\Sigma^* = \N$ and $\cL$ be the set of all infinite
    subsets of $\N.$ Then, for every randomized algorithm
    $\generator = \inparen{\generator_n\colon \N ^n \rightarrow \Delta(\N)}_{n \in \N}$ there exists a language $K \in \cL$
    and an enumeration $E$ of $K$ such that, with probability
    1, $\generator$ will produce unseen elements of $K$ only
    finitely many times.
\end{proposition}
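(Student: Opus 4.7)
The plan is to adapt the deterministic diagonalization of \cref{prop:not-generatable-det} to the randomized setting by replacing the ``larger than the output'' trick with a high-probability quantile truncation combined with the first Borel--Cantelli lemma. The key observation is that any $\N$-valued random variable has a finite upper tail, so at each step we can essentially pretend the generator's output is bounded and then reuse the deterministic construction.

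Concretely, I would fix the randomized algorithm $\generator$ and construct the enumeration $(x_n)_{n \in \N}$ inductively as a deterministic sequence (depending on $\generator$). Set $x_1 = 1$. Given $x_1, \ldots, x_n$, consider the $\N$-valued random variable $Y_n := \generator_n(x_1,\ldots,x_n)$, whose law is completely determined by $\generator$'s internal randomness and the already-fixed history. Since $\lim_{M \to \infty} \Pr[Y_n \leq M] = 1$, there is some $M_n \in \N$ with $\Pr[Y_n > M_n] \leq 2^{-n}$. Set
\[
    x_{n+1} := \max\{x_n, M_n\} + 1,
\]
and let $K := \{x_n : n \in \N\}$ with enumeration $E := (x_1, x_2, \dots)$. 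Because $(x_n)$ is strictly increasing, $K$ is infinite, so $K \in \cL$, and $E$ is a valid enumeration of $K$.

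For the analysis, define the ``escape'' events $B_n := \{Y_n > M_n\}$. By construction $\sum_{n \in \N} \Pr[B_n] \leq \sum_{n} 2^{-n} < \infty$, so the first Borel--Cantelli lemma (which does not require independence) yields $\Pr[B_n \text{ i.o.}] = 0$. Hence, almost surely there is a random but finite $N$ such that $Y_n \leq M_n < x_{n+1}$ for every $n \geq N$. In particular $Y_n \notin \{x_{n+1}, x_{n+2}, \ldots\} = K \setminus \{x_1,\ldots,x_n\}$, so $Y_n$ is either already-seen (one of $x_1, \ldots, x_n$) or not in $K$ at all; in either case round $n$ does not produce an unseen element of $K$. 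This gives that almost surely only finitely many rounds produce unseen elements of $K$, as claimed.

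The only delicate point is to verify that the $M_n$'s are well-defined integers selected without peeking at future randomness. This is fine: the inductive construction makes $x_1, \ldots, x_n$ a deterministic function of $\generator$ alone, so $Y_n$ has a fixed distribution over $\N$ and $M_n$ can be chosen as an appropriate quantile of that marginal. Possible dependence among the $Y_n$'s across rounds is irrelevant because the first Borel--Cantelli lemma only requires $\sum_n \Pr[B_n] < \infty$. The main conceptual step — and really the only one that goes beyond the deterministic proof — is the soft-truncation idea, which replaces the hard bound ``$Y_n < x_{n+1}$'' with the almost-sure eventual bound obtained from summable tails.
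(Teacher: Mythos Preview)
Your proposal is correct and follows essentially the same approach as the paper: both construct a deterministic enumeration by choosing each $x_{n+1}$ above a high-probability quantile of the generator's output distribution at step $n$, then invoke the first Borel--Cantelli lemma on the summable tail probabilities. The only cosmetic differences are your choice of $2^{-n}$ versus the paper's $1/n^2$ and your slightly cleaner bookkeeping (taking the quantile of $Y_n$ alone and then the max with $x_n$, rather than folding the previous $x_j$'s into the random variable before taking the quantile).
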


\begin{proof}
    Let $\generator = \inparen{\generator_n\colon \N ^n \rightarrow \Delta(\N)}_{n \in \N}$ be a (potentially randomized) 
    generating algorithm.
    We will show that there is a language $K \in \cL$
    and an enumeration $E$ of $K,$ both of which depend
    on $\generator,$ for which $\generator$ fails
    to generate from $K$ in the limit, with probability 1.
    We will construct
    the enumeration and the target language inductively.
    For all $n \in \N,$ we denote by $E_n$ the
    first $n$ elements of $E.$

    Let $x_1 = 1$ and $E_1 = \inparen{1}.$ For any $n \in \N, n \geq 2$ we define 
    the random variable 
    \[
        X_n \coloneqq \max\inbrace{\max_{j < n}x_{j}, \max_{j < n}\generator_{j}(x_1,\ldots,x_{j})}\,,
    \]
    and we let 
    \[
        x_n \coloneqq \min\inbrace{j \in \N\colon \Pr[X_n \geq j] \leq \frac{1}{n^2}} \,.
    \]
    We also define $E_n \coloneqq (x_1,\ldots,x_n)$
    and $K = \cup_{n \in \N} x_n.$

    First, notice that, with probability 1, $x_i \neq x_j, \forall i, j \in \N, i \neq j,$ hence it holds that $\abs{K} = \infty,$ thus $K \in \cL.$
    Moreover, notice that $E$ is a valid enumeration 
    of $K.$ Next, we argue that with probability 1, the
    algorithm outputs unseen elements of $K$
    only finitely many times.
    For all $n \in \N,$ let $\cE_n$ be the event
    that $\generator_n(x_1,\ldots,x_n) \in K \setminus \inbrace{x_1,\ldots,x_n}$.
    Notice that, by definition of the enumeration $E$
    and the target language $K,$
    \[
        \Pr\insquare{\cE_n} = \Pr\insquare{\generator_n(x_1,\ldots,x_n) \in \inbrace{x_{n+1},x_{n+2},\ldots}} \,.
    \]
    Moreover, since with probability 1,
    $x_{n+1} < x_{n+2} < \ldots,$ it holds that
    \[
        \Pr\insquare{\generator_n(x_1,\ldots,x_n) \in \inbrace{x_{n+1},x_{n+2},\ldots}} \leq 1-\Pr\insquare{\generator_n(x_1,\ldots,x_n) < x_{n+1}} \,. 
    \]
    By definition of $x_{n+1}$ it holds that
    \[
        \Pr\insquare{\generator_n(x_1,\ldots,x_n) < x_{n+1}} \geq 1 - \frac{1}{(n+1)^2} \,. 
    \]
    Chaining the previous inequalities,
    we get
    \[
        \Pr\insquare{\cE_n} \leq \frac{1}{(n+1)^2} \,,
    \]
    thus,
    \[
        \sum_{n \in \N} \Pr\insquare{\cE_n} < \infty \,.
    \]
    By the Borel--Cantelli lemma we deduce that
    with probability 1 only finitely many of the events
    $\inbrace{\cE_n}_{n \in \N}$ will occur.
    
\end{proof}

\subsection{Extensions to a Family of Constructions}\label{sec:extensions}
We first explain some high-level properties 
of our lower bound construction and illustrate
how they can be used to show non-generatability
in other settings. In our construction, we can split the underlying collection $\cL$ into two parts, say $\cL_1, \cL_2$ that allow
for the following lower bound argument: the adversary can start enumerating some language $L_1$
from $\cL_1,$ if at some point the generator
generates an unseen element of $L_1$ the
adversary moves to enumerating a language $L_2$ from $\cL_2$, and then if the generator generates unseen elements of $L_2$ the adversary moves back again
to enumerating some language $L_3$ of $\cL_1.$
Crucially, the adversary can ensure that 
when switching from $\cL_2$ to $\cL_1$
the generator makes a mistake, and
if there are infinitely many switches 
the adversary enumerates some language $L_\infty$
that is in the collection.

    {To illustrate the generality of these conditions, we present another family of language collections $\cL_1,\cL_2,\dots,\cL_\infty$ that are individually generatable, but whose union is not generatable.
    Let the domain be $\cX=\N\times \N$.
    Fix an index $i\in\N$.
    We begin by defining $\cL_i$.
    Each language $L\in \cL_i$ is parameterized by $i$ finite subsets $A_{-1}, A_1, A_2, \dots, A_{i-1}$:
    \[
        \inparen{A_{-1},A_1,\dots,A_{i-1}\colon \abs{A_{-1}}, \abs{A_1},\dots, \abs{A_{i-1}}<\infty ~~\text{and}~~ A_{-1},A_1,\dots, A_{i-1} \subseteq \N}\,.
    \]
    Given the finite sets $A_{-1}, A_1, A_2, \dots, A_{i-1}\subseteq\N$, the corresponding language $L$ in $\cL_i$ is defined as follows:
    \[
        L = \inparen{
            \bigcup_{k=1}^{i-1}~~
                    \bigcup_{\ell \in A_i}~~ 
                        \inbrace{\inparen{k,\ell}}
        }
        \cup \bigcup_{j=1}^\infty \inbrace{(i,j)}
        \cup \bigcup_{\ell\in A_{-1}} \inbrace{(-1,\ell)}
        \,.
    \]
    Next, we define $\cL_\infty$: 
    Each language $L\in \cL_\infty$ is parameterized by a countable collection of finite subsets 
    \[
        \inparen{A_{-1},A_1,A_2,\dots\colon \abs{A_{-1}}, \abs{A_1}, \abs{A_2},\dots <\infty ~~\text{and}~~ A_{-1},A_1,A_2,\dots \subseteq \N}\,.
    \]
    Given finite sets $A_{-1},A_1,A_2,\dots\subseteq\N$, the corresponding language $L$ in $\cL_\infty$ is defined as follows
    \[
        L = 
            \inparen{
                \bigcup_{k=1}^\infty~~
                    \bigcup_{\ell \in A_i}~~ 
                        \inbrace{\inparen{k,\ell}}
            }
            \cup 
            \bigcup_{\ell \not\in A_{-1}}\inbrace{(-1,\ell)}
            \,.
    \]
    Note that the last union is over elements $\ell$ not belonging to $A_{-1}$.
    Hence, each language in $\cL_\infty$, contains all but finitely many elements of the form $(-1,j)$ (for $j\in\N$).
    This, in particular, ensures that $\cL_\infty$ can be trivially generates: e.g., it suffices to  output element $(-1,t)$ in the $t$-th iteration.
    \begin{remark}
        Consider the collection $\cL = \cup_{i=1}^\infty \cL_i \cup \cL_\infty$ Having described this collection,
        it is not hard to show that a direct adaptation of our approach in the main result
        goes through. We can group these 
        collections in a natural way, \eg,
        by having
        $\cL_\infty, \cL_1, \cL_3, \cL_5, \ldots,$ in one group and
        $\cL_2, \cL_4,\ldots,$ in the other. Then, the lower bound
        follows by picking an enumeration
        that alternates
        between these two collections,
        as described above.
    \end{remark}
    }

\end{document}